\DeclareMathAlphabet{\mathpzc}{OT1}{pzc}{m}{it}
\DeclareMathAlphabet\mathbfcal{OMS}{cmsy}{b}{n}
\newcommand{\graph}{{\cal G}}
\newcommand{\nodes}{{\cal V}}
\newcommand{\edges}{{\cal E}}
\newcommand{\flow}{\digamma}
\newcommand{\labelset}{\mathcal{L}}
\newcommand{\labelvar}{f}
\newcommand{\labelvars}{f}
\newcommand{\unary}{D}
\newcommand{\RANSAC}{{\small\sc {RANSAC}}}
\newcommand{\PEARL} {{\small\sc {PEaRL}}}
\newcommand{\params}{\mathbf{\theta}}
\newcommand{\param}{\theta}
\newcommand{\subL}{S_l}
\newcommand{\subR}{S_r}
\newcommand{\rF}{\mathcal{F}_r}
\newcommand{\lF}{\mathcal{F}_l}
\newcommand{\pq}{(p,q)}
\newcommand{\M}{\mathcal{M}}
\newcommand{\N}{\mathcal{N}}
\newtheorem{theorem}{Theorem}[section]
\newtheorem{lemma}[theorem]{Lemma}
\newtheorem{corollary}[theorem]{Corollary}
\newenvironment{proof}[1][Proof]{\begin{trivlist}
\item[\hskip \labelsep {\bfseries #1}]}{\end{trivlist}}
\begin{document}

\pagestyle{myheadings}
\markright{\small H. Isack and Y. Boykov, arXiv:1303.2607v2, April 2014 \hfill}
\setcounter{page}{1}

\title{Joint optimization of {\em fitting \& matching}\\ in multi-view reconstruction}

\author{Hossam Isack \hspace{5ex}  Yuri Boykov \\
Computer Science Department \\
Western University, Canada \\
{\tt\small habdelka@csd.uwo.ca, \hspace{2ex} yuri@csd.uwo.ca}}

\date{\today}
\date{April 8, 2014}
\maketitle

\thispagestyle{myheadings}

\begin{abstract}
Many standard approaches for geometric model fitting are based on pre-matched image features.
Typically, such pre-matching uses only feature appearances (e.g.~SIFT) and a large number of non-unique features must be discarded in order to control the false positive rate.
In contrast, we solve feature matching and multi-model fitting problems in a joint optimization framework.
This paper proposes several {\em fit-\&-match} energy formulations based on a generalization of the {\it assignment problem}.
We developed an efficient solver based on {\it min-cost-max-flow} algorithm that finds near optimal solutions.
Our approach significantly increases the number of detected matches. In practice, energy-based joint
{\em fitting \& matching} allows to increase the distance between view-points previously restricted by
robustness of local SIFT-matching and to improve the model fitting accuracy when compared
to state-of-the-art multi-model fitting techniques.
\end{abstract}

\section{Introduction}

Many existing methods for model fitting and 3D structure estimation use
pre-matched image features as an input (bundle adjustment  \cite{triggs:00}, homography fitting \cite{lowe:ijcv07,suter:iccv09}, rigid motion estimation \cite{Tomasi:92,Costeira:95,Vidal:08}).
Vice versa, many matching methods (sparse/dense stereo) often use some pre-estimated structural constraints, e.g.~epipolar geometry, to identify correct matches/inliers.
This paper introduces a novel framework for simultaneous estimation of
high-level structures (multi-model fitting) and low-level correspondences (feature matching).
We discuss several regularization-based formulations of the proposed {\em fit~\&~match} (FM) problem.
These formulations use a generalization of the {\it assignment problem} and
we use efficient specialized {\it min-cost-max-flow} solver that has been overlooked in the computer vision community.
This paper primarily focuses on jointly solving multi-homography fitting and sparse feature matching
as a simple show case for the FM paradigm. Other applications would be rigid motions estimation, camera pose estimation \cite{softposit}, etc.

\paragraph{Related Work:}
In case of reliable matching, RANSAC is a well-known robust method for single model fitting. The main idea is to generate a number of model proposals by randomly sampling the matches and then select one model with the largest set of inliers (a.k.a. consensus set) with respect to some fixed threshold.
In case of unreliable matching, e.g.~repetitive texture or wide view-point, RANSAC or any technique that relies on pre-computed matching would fail.

Guided-MLESAC \cite{tordoff2005guided} and PROSAC \cite{chum2005matching} are RANSAC generalizations that try to overcome unreliable matches while generating model hypotheses.
Their main idea is to ensure that matches with high matching scores are more likely to get sampled,
thus ``guiding'' the sampling process while generating model hypotheses.
One could argue that these techniques would still fail since false matches could also have high matching scores, e.g.~scenes with repetitive texture.
SCRAMSAC \cite{sattler2009scramsac} is a form of spatial guided sampling that uses a spatial consistency filter to restrict the sampling domain to matches with similar local geometric consistency. This method is sensitive to the ratio of occluded/unoccluded features, as in that case the assumption that correct matches form a dense cluster is no longer valid.
The main drawback of these RANSAC generalizations is that they focus on generating a reliable model hypotheses by using pre-matched features (fixed matching). That drawback could be avoided by jointly solving the matching and fitting problems.

An attempt to formulate an objective function for {\it fitting-\&-matching} naturally leads to a version of the assignment problem.
The majority of prior work could be divided into two major groups: matching techniques using quadratic assignment problems and FM techniques using linear assignment subproblems.

{\it Quadratic assignment problem} (QAP) normally appears in the context of non-parametric matching. For example, the methods in \cite{VK:PAMI12,berg2005shape,leordeanu2005spectral} estimate non-rigid motion correspondences as a sparse vector field.
They rely on a quadratic term in the objective function to encourage geometric regularity between identified matched pairs. Such QAP formulations often appear in shape matching and object recognition.
QAP is NP-hard and these methods use different techniques to approximate it.
For example, \cite{Rang96graduated} approximates QAP by iteratively minimizing its first-order Taylor expansion, which reduces to a {\it linear assignment problem} (LAP).

If correspondences are constrained by some parametric model(s), matching often simplifies to LAP when model parameters are fixed.
In this case, the geometric regularity is enforced by a model fidelity term (linear w.r.t.~matching variables) and pair-wise consistencies \cite{VK:PAMI12,berg2005shape,leordeanu2005spectral} are no longer needed.
Typically for FM problems, LAP-based feature matching and model parameter fitting are preformed in a {\it block coordinate descent} (BCD) fashion.
For example, SoftPOSIT \cite{softposit} matches 2D image features to 3D object points and estimate camera pose in such iterative fashion. Building on the ideas in SoftPOSIT Serradell et al.~\cite{serradell2010combining} fit a single homography using geometric and appearance priors with unknown correspondences.
SoftPOSIT utilizes is a smoothing technique that tries to move from one suboptimal solution of a smoothed version of the objective to another less smoothed one by decreasing the temperature, i.e. smoothing factor. Their technique does not guarantee global optimal, can not handle multiple models, and it is sensitive to the temperature update factor.

Our work develops a generalization of {\it linear assignment problem} for solving FM problem when matching is constrained by an unknown number of geometric models.
In contrast to \cite{softposit,serradell2010combining}, we do not assume that matches/correspondences are constrained by a single parametric model.
Note that in order to solve FM problem for multi-models, a regularization term is required to avoid over fitting. Unlike \cite{serradell2010combining,softposit,VK:PAMI12}, our energy formulation includes label cost regularization as in \cite{LabelCosts:IJCV12}.

Another related approach, {\it guided matching}, is a post-processing heuristic for increasing the number of matches in case of single model fitting \cite{HZ:03}.
Similar to our approach, guided matching iteratively re-estimates matches and refines the model.
In contrast to our approach, guided matching pursues different objectives at refitting and re-matching steps\footnote{Geometric errors minimization vs. inliers maximization.} and does not guarantee convergence.
Our method could be seen as an energy-based guided matching with guaranteed convergence.
Moreover, unlike guided matching \cite{HZ:03}, our regularization approach is designed for significantly harder problems where data supports multiple models.

\paragraph{Contribution:}
In this work we propose two FM energy functionals~\eqref{eq:DL} and~\eqref{eq:DVL} for jointly solving matching and multi-model fitting.
Energy~\eqref{eq:DL} consists of two terms: unary potentials for matching similar features and assigning matched features to their best fitting geometric models,
and a label cost term to discourage overfitting by penalizing the number of labels/models assigned to matches.
Energy~\eqref{eq:DVL} consists of unary potential and label cost terms, as in energy~\eqref{eq:DL}, and a pairwise potential term for encouraging nearby matches to be assigned to the same label/model.

The key sub-problem when minimizing~\eqref{eq:DL} or~\eqref{eq:DVL} in BCD fashion is to solve multiple {\it generalized assignment problem}  (GAP), which is our novel generalization of LAP to multi-model case, problems efficiently.
Regularized GAP jointly formulates feature-to-feature matching and match-to-model assignment while penalizing the number of models assigned to matches.
We propose a fast approach to solve multiple similar GAP instances efficiently, by using {\it min-cost-max-flow} and flow recycling.

Figure~\ref{fig:metronCollgeZoomed} compares the results of a standard energy-based multi-model fitting algorithm \cite{LabelCosts:IJCV12} (EF) and our proposed energy-based multi-model {\it fitting-\&-matching} algorithm (EFM).
EF used the standard pre-matching technique in \cite{sift:04} that rejected a relatively large number of true matches.
EFM found better models' estimates because it nearly doubled the number of identified matches.

\section{Our Approach}
Standard techniques for sparse feature matching \cite{sift:04} independently decide each match relying on the discriminative power of the used feature descriptor. These techniques are prone to ignoring a large number of non-distinct image features that could have been valid matches. Our unified framework simultaneously estimate high-level structures (multi-model fitting) and low-level correspondences (features matching). Unlike standard techniques, our approach is less vulnerable to the descriptor's discriminative power. We discuss regularization-based formulation of the proposed {\it fit \& match} problem. While there are many different applications for a general FM paradigm, this work primarily focuses on jointly solving geometric multi-model fitting (homographies) and sparse feature matching.

We will use the following notations in defining our energy:
\begin{center}
\begin{tabular}{ccl}
$\lF$ & - & set of all observed features in the left image. \\
$\rF$ & - & set of all observed features in the right image. \\
$\labelset$   & - & a set of randomly sampled homographies (labels).\\
$\labelvar_p$ & - & label assigned to feature $p$ such that $\labelvar_p \in \labelset$\\
$\labelvars$  & - & a {\em labelling} of all features in the left image, $\labelvars=\{\labelvar_p| p \in \lF\}$\\
$\param_h$ & - & parameters of homography $h$ from left image to right image. \\
$\param$ & - & set of all models' (homographies) parameters. \\
$\subL$ & - & Subset of features in the left image supporting one geometric model\\
     &  & (plane, homography), see Figs (\ref{fig:correspondingSets})(a-b).\\
$\subR$ & - & Subset of features in the right image supporting one geometric model\\
 &  & (plane, homography), see Figs (\ref{fig:correspondingSets})(a-b).\\
$x_{pq}$ & - &is a binary variable which is 1 if $p$ and $q$ are matched (assigned)  \\
     &  & to each other and 0 otherwise.\\
$\M$ &:= & $\{x_{pq}\;|\;(p,q) \in \lF \times \rF\}$.\\
$Q(p,q)$ & - & appearance penalty for features $p\in \lF$ and $q\in \rF$ \\ & & based on similarity of their descriptors. \\
$\N$& - & edges of near-neighbour graph, e.g. Delaunay triangulation, \\
     &  &       for left image features.\\
\end{tabular}
\end{center}

\subsection{Energy}
We will define the overall matching score between two features $p\in \lF$ and $q\in \rF$ as a function of geometric transformation $\param_h$
\begin{equation}
D_{pq}(\param_h)=||\param_h\cdot p-q|| + Q(p,q) \label{eq:nonsymmatchingscore}
\end{equation}
\vspace{-0.05in}
combining the geometric error and the appearance penalty where $||\;||$ denotes geometric transfer error. A similar matching score was used in computing the ground truth matching in \cite{Mikolajczyk:CVPR2003,YanKe:CVPR2004}.
We can also use a symmetric matching score
\begin{equation}
D_{pq}(\param_h)=||\param_h\cdot p-q|| + ||\param_h^{-1}\cdot q-p||+ Q(p,q).
\label{eq:symmatchingscore}
\end{equation}
We are only interested in symmetric appearance penalty $Q(p,q)$, e.g. the angle (or some metric distance) between the features' descriptors of $p$ and $q$. From here on $D_{pq}$ refers to the symmetric matching score.

In this work, $Q(p,q)=0$ if the angle between the two features' descriptors is less than $\pi/4$ and $\infty$ otherwise. The aforementioned non-continuous appearance penalty is less sensitive to the descriptor's discriminative power in comparison to the continuous one.

To simplify our formulation we will introduce our energies under the assumption that there are no occlusions
\begin{eqnarray}
E_1(\labelvars,\params,\M) &= &\sum_{\substack{p \in \lF\\ q \in \rF}} D_{pq}(\param_{\labelvar_p})\cdot x_{pq} + \beta \sum_{h\in\labelset} \delta_h(\labelvars) \label{eq:DL}\\
\textrm{s.t.} & &\left.\begin{array}{ll}
                \sum_{p \in \lF} x_{pq} =1  &\quad \forall q \in \rF  \\
                \sum_{q \in \rF} x_{pq} =1  &\quad \forall p \in \lF \\
                x_{pq} \in \{0,1\} &\quad \forall p \in \lF,\;\forall q \in \rF
                \end{array}\right\}\label{eq:121Constrains_M}
\end{eqnarray}
where $\delta_h(\labelvars) = [\exists p\in \lF: \labelvar_p=h]$ and $[\cdot]$ are {\em Iverson brackets}, and
\begin{eqnarray}
 E_2(\labelvars,\params,\M) &= &\sum_{\substack{p \in \lF\\ q \in \rF}} D_{pq}(\param_{\labelvar_p})\cdot x_{pq}+\lambda \sum_{\pq\in\N} [\labelvar_p\neq \labelvar_q] + \beta \sum_{h\in\labelset} \delta_h(\labelvars) \label{eq:DVL}
\end{eqnarray}
under constraints~\eqref{eq:121Constrains_M}. We will show how to handle outliers/occlusions later on. $E_2$ is more powerful than $E_1$ because the spatial regularizer eliminates the artifacts that results from using only one regularizer in $E_1$. The reader is refereed to \cite{LabelCosts:IJCV12} for a more detailed discussion comparing $E_1$ and $E_2$ for fixed matching in the context of multi-model fitting.

\subsection{Optimization}
In this section, we describe an efficient approach, EFM$_1$, to minimize $E_1$ in a {\it block coordinate decent} (BSD) fashion, and a second approach, EFM$_2$, to minimize $E_2$.

\vspace{0.2in}
\begin{algorithm}[H]
{\small
\DontPrintSemicolon
\textbf{Initialization:} Find an initial $\M$ using standard matching techniques\;
\textbf{repeat}\;
\hspace{0.2in} Given $\M$, solve~\eqref{eq:lc_givenM} using {\PEARL} \cite{LabelCosts:IJCV12} for $\labelvars$ and $\params$\;
\hspace{0.2in} Given $\params$, solve~\eqref{eq:lc_givenParams}-\eqref{eq:121Constrains_M_F} using LS-GAP, see Sec.~\ref{sc:ls-gap}, for $\M$ and $\labelvars$\;
\textbf{until} $E_1$ converges\;
}\NoCaptionOfAlgo
\caption*{\bf{E}nergy-based {\it \bf{F}itting \& \bf{M}atching for $E_1$ (EFM$_1$)}\label{alg:E1}}
\end{algorithm}
\vspace{0.2in}

EFM$_1$ finds an initial matching using standard matching techniques and then it iteratively minimizes $E_1$  by alternating between solving for $\labelvars$ and $\params$ while fixing $\M$, and solving for $\labelvars$ and $\M$ while fixing $\params$. Although EFM$_1$ is guaranteed to converge since $E_1$ is bounded below, i.e.~$E_1 \geq \beta$, it is not trivial to derive a theoretical bound on the convergence rate and approximation ratio of EFM$_1$. However, in Section~\ref{sc:evaluation}, we empirically show that EFM$_1$ converges in a few iterations to a near optimal solution.

On the one hand, $E_1$ for fixed $\M$ reduces to
\begin{equation}
E(f,\params) = \underset{p \in \lF}{\sum}D_{p}(\param_{\labelvar_p})+ \beta \sum_{h \in\labelset} \delta_h(\labelvars) \label{eq:lc_givenM}
\end{equation}
where  $D_p(\param_h) = D_{pq}(\param_h) \;\; \forall h \in \labelset$  provided that $q$ is assigned to $p$ by $\M$, i.e.~ $x_{pq}=1$.
Furthermore, energy~\eqref{eq:lc_givenM} could be efficiently solved for $\labelvars$, $\params$ using {\PEARL} \cite{LabelCosts:IJCV12}.

On the other hand, $E_1$ for fixed $\params$ reduces to
\begin{eqnarray}
E(\labelvars,\M) = & \underset{p \in \lF}{\sum} \; \underset{q \in \rF}{\sum}D_{pq}(\param_{f_p}) x_{pq} +  \beta \sum_{h \in\labelset} \delta_h(\labelvars) \label{eq:lc_givenParams} \\
\textrm{s.t.}\quad   & \left.\begin{array}{ll}
                         \underset{ p \in \lF}{\sum} x_{pq} =1  & \forall q \in \rF \\
                         \underset{ q \in \rF}{\sum} x_{pq} =1  & \forall p \in \lF \\
                         x_{pq} \in \{0,1\} & \forall \; p \in \lF,\; q \in \rF.
                        \end{array}\right\}\label{eq:121Constrains_M_F}
\end{eqnarray}
We will refer to the special unregularized case of optimization problem~\eqref{eq:lc_givenParams}-\eqref{eq:121Constrains_M_F} where $\beta=0$ as the {\it generalized assignment problem} (GAP) \footnote{Our definition of GAP is different from the formal definition in optimization literature.}.

This is a weighted matching problem over a fixed set of multiple models that match features and assigns each match to a model.
GAP is an {\it integral linear program}, see Appendix~\ref{ap:TMproof} for proof, and therefore any Linear Programming toolbox could be used to find its optimal solution by solving its relaxed LP---but will be considerably slow due to the size of the problem at hand.
A fast approach to solve GAP is described in Section~\ref{sc:gap_via_reduction}.
The optimal solution for GAP may overfit models to data since the number of models is not regularized when $\beta=0$. For $\beta>0$ optimization problem~\eqref{eq:lc_givenParams}-\eqref{eq:121Constrains_M_F} could be solved using LS-GAP, introduced in Section~\ref{sc:ls-gap}, which utilizes a GAP solver in a combinatorial local search algorithm.
This local search over different subsets of $\labelset$ selects a solution reducing energy~\eqref{eq:lc_givenParams}.

It should be noted that EFM$_1$ requires initial matching. To overcome this draw back it is possible to just randomly sample models from the set of all possible matches. Then alternate between fixing $\theta$ to solve~\eqref{eq:lc_givenParams}-\eqref{eq:121Constrains_M_F} for $\M$ and $\labelvars$ using H-GAP, introduced in Section~\ref{sc:H-GAP}, and fixing $\M$ and $\labelvars$ to solve for $\theta$ using Levenberg-Marquardt. H-GAP idea is based ons a similar a greedy heuristic method \cite{HeuristicUFLP,cornuejols1983uncapacitated,Hochbaum82} that finds an approximate solution for the Uncapacitated Facility Location Problem. One major difference between LS-GAP and H-GAP is that the latter has very small upper bound on the number of iterations for termination compared to LS-GAP.

\vspace{0.2in}
\begin{algorithm}[H]
{\small
\DontPrintSemicolon
\textbf{Initialization:} Find an initial $\M,\labelvars,\params$ using EFM$_1$\;
\textbf{repeat}\;
\hspace{0.2in} Given $\M$, solve \eqref{eq:smth_lc_givenM} using {\PEARL} for $\labelvars$ and $\params$\;
\hspace{0.2in} Given $\labelvars,\params$, solve \eqref{eq:smth_lc_givenParamsLabelvars}-\eqref{eq:121Constrains_M} using LC-GAP, see Sec.~\ref{sc:lc-gap}, for $\M$\;
\textbf{until} $E_2$ converges \;
}\NoCaptionOfAlgo
\caption*{\bf{E}nergy-based {\it \bf{F}itting \& \bf{M}atching for $E_2$ (EFM$_2$)}\label{alg:E2}}
\end{algorithm}
\vspace{0.2in}

EFM$_2$ uses EFM$_1$ result as an initial solution and then iteratively minimizes $E_2$  by alternating steps solving for $\labelvars$ and $\params$ while fixing $\M$, and solving for $\M$ while fixing $\params$ and $\labelvars$. Energy~\eqref{eq:DVL} for fixed $\M$  reduces to
\begin{equation}
E(f,\params) = \underset{p \in \lF}{\sum}D_{p}(\param_{\labelvar_p})+ \lambda \sum_{\pq\in\N} [\labelvar_p\neq \labelvar_q] +
\beta \sum_{h \in\labelset} \delta_h(\labelvars) \label{eq:smth_lc_givenM}
\end{equation} and is solved using \PEARL.
For fixed $\labelvars$ and $\params$ energy~\eqref{eq:DVL} reduces to
\begin{eqnarray}
E(\M) =\underset{p \in \lF}{\sum} \; \underset{q \in \rF}{\sum}D_{pq}(\param_{\labelvar_p}) x_{pq}\label{eq:smth_lc_givenParamsLabelvars}
\end{eqnarray}
under constraints~\eqref{eq:121Constrains_M}. Energy~\eqref{eq:smth_lc_givenParamsLabelvars} is solved using label constrained GAP (LC-GAP), Section~\ref{sc:lc-gap}. LC-GAP is a variant of the fast GAP solver that can change feature-to-feature matching without affecting the current labelling. It should be noted that, based on our experience, EFM$_2$ slightly modifies the initial solution by rejecting or correctly matching less than a handful of false positives. Therefore, in practice we suggest to run only one iteration of EFM$_2$ to reject the false positives incorrectly matched due to lack of spatial coherency.

Due to occlusions $|\lF|\!\!\neq\!\!|\rF|$ and that renders \eqref{eq:DL}-\eqref{eq:121Constrains_M} unfeasible since the one-to-one constraints could never be met.
We add $||\lF|\!-\!|\rF||$ dummy features, with a fixed matching cost $T$, to the smaller set of features to ensure feasibility.
This is equivalent to changing a rectangular assignment problem to a square one.
Also, to make our approach robust to outliers we introduce an outlier model $\phi$ such that $D_{pq}(\phi)\!=\!T$ for any $p \!\in\! \lF$ and $q \!\in\! \rF$.
The use of an outlier model with a uniformly distributed cost $T$ is a common technique in Computer Vision \cite{LabelCosts:IJCV12,PEARL:IJCV12}.

\newpage
\section{Algorithms}
\label{sc:alg}
In Section~\ref{sc:gap}, we will give a brief overview of the {\it min-cost-max-flow} (MCMF) problem, and its {\it Successive Shortest Path} (SSP) algorithm \cite{networkflows93} in order to introduce our {\em flow recycling} technique for efficiently solving similar GAP instances. Then we will describe two ways to solve GAP using a MCMF solver. Section~\ref{sc:ls-gap} covers {\it Local Search}-GAP (LS-GAP) algorithm which is used in EFM to find an approximate solution for energy~\eqref{eq:lc_givenParams}-\eqref{eq:121Constrains_M_F} for $\beta>0$ by solving a series of similar GAP instances. As an alternative to LS-GAP, Section~\ref{sc:H-GAP} covers a greedy heuristic algorithm H-GAP. Section~\ref{sc:lc-gap} covers LC-GAP which is a variant of LS-GAP that can change feature-to-feature matching without affecting the current labelling.

\subsection{Solving GAP}
\label{sc:gap}
We will describe our most recent method for solving GAP in Section~\ref{sc:gap_via_reduction} and then in Section~\ref{sc:gap_no_reduction} we describe an earlier method that we previously used to solve GAP, for the sake of completeness.
It should be noted that our {\em flow recycling} for solving a series of similar GAP instances could used with any MCMF solver \cite{networkflows93} or even weighted bipartite matching algorithms \cite{AP:Burkard}, it is not restricted to SSP. For simplicity, we discuss flow recycling in the context of SSP.

\subsubsection{Min-Cost-Max-Flow Problem (overview)}
MCMF problem is defined as follows. Let $\graph=(\nodes,\edges)$ denote a graph with  vertices $\nodes$
and edges $\edges$ where each edge $(v,w)\in \edges$ has a capacity $u(v,w)$ and cost $c(v,w)$.
Let $\flow$ be a flow function such that $0 \leq \flow(v,w) \leq u(v,w)$ for over all edges in $\edges$. The cost of an arbitrary flow function $\flow$ is defined as $cost(\flow)=\sum_{(v,w)\in \edges} c(v,w)\cdot\flow(v,w)$. MCMF is a valid maximum flow $\flow$ from $s$ to $t$ in $\nodes$ that has minimum cost.

We will limit our discussion on SSP \cite{networkflows93} to a certain type of graphs; unit capacity edges, and complete bipartite graphs with $|\nodes|=2n$ and $|\edges|=n^2$. SSP for solving MCMF on general graphs is beyond the scope of our discussion. SSP successively finds the shortest path w.r.t.~edge costs from source to sink and augments these paths until the network is saturated. For unit capacity graphs, augmentation of an edge reverses its direction and flips its cost sign. Finding the shortest path with negative costs is expensive. Instead of the original costs SSP uses {\it reduced costs} $$c^\pi(v,w):=c(v,w)-\pi(v)+\pi(w) \ge 0$$ where $\pi(v)$ is the {\it potential} of node $v$. Initially set to zero, node potentials are updated after each path augmentation to ensure that the reduced costs non-negativity constraints are satisfied: $\pi(v)=\pi(v)-d(v)$ for all $v$ in $\nodes$ where $d(v)$ is the shortest distance cost w.r.t.~$c^\pi$ from the $s$ to $v$. A shortest path w.r.t.~$c^\pi$ could be found in $O(n^2)$ using Dijkstra's algorithm. As we are dealing with complete bipartite graphs, we need to find $n$ paths to saturate the the network between $s$ and $t$. Thus, SSP is $O(n^3)$ for unit capacity complete bipartite graphs with $|\nodes|=2n$ and $|\edges|=n^2$.

\subsubsection{Solving GAP via Reduction to LAP}
\label{sc:gap_via_reduction}
GAP~\eqref{eq:lc_givenParams}-\eqref{eq:121Constrains_M_F} reduces to LAP since $\labelvar$ and $\M$ are independent:
any pair $(p,q)$ has optimal label $\labelvar_p= \underset{{h\in\labelset}}{arg \min}\;D_{pq}(\theta_h)$ independently from the value of $x_{pq}$.
A simple proof by contradiction shows that the previous statement is true. Given an optimal GAP solution where $x_{pq}$ was assigned to label $k$ such that $D_{pq}(\theta_k) > \underset{{h\in\labelset}}{\min}\;D_{pq}(\theta_h)$ then the GAP solution is not optimal as we could decrease the energy by assigning $x_{pq}$ to model $k^*=\underset{{h\in\labelset}}{arg \min}\;D_{pq}(\theta_h)$ without violating any of the linear constraints~\eqref{eq:121Constrains_M_F}.
The optimal $\M$ in~\eqref{eq:lc_givenParams}-\eqref{eq:121Constrains_M_F} is found by solving the following LAP
\begin{equation}
E(\M)=\sum_{p \in \lF,\; q \in \rF} D_{pq}\cdot x_{pq}\label{eq:rgap}
\end{equation}
subject to~\eqref{eq:121Constrains_M_F} where $D_{pq}:=\underset{{h\in\labelset}}{\min}\;D_{pq}(\theta_h)$.
In other words, unregularized GAP could be reduced to a regular assignment problem by selecting the model with lowest cost for every possible match.

LAP~\eqref{eq:rgap}-\eqref{eq:121Constrains_M_F} can be equivalently formulated as a standard {\em min-cost-max-flow} (MCMF) problem with known efficient
solvers \cite{AP:Burkard,networkflows93}. To formulate LAP~\eqref{eq:rgap}-\eqref{eq:121Constrains_M_F} as MCMF problem we build graph $\graph\!\!=\!\!(\nodes,\edges)$ with nodes
\begin{align*}
    \nodes=&\{ s,t\} \cup \{p \;|\; p\in \lF\} \cup \{q \;|\;q\in \rF\},
\vspace{-1.5ex}
\end{align*}
edges
$$\begin{matrix*}[l]
    \edges=\!\!\!\!\!&\{(s,p),(q,t),(p,q) | p\in \lF, q\in \rF\},\\
\end{matrix*}$$
capacity $u(v,w)\!=\!1$ for all edges $ (v,w) \in \edges$, and cost $c(p,q) =D_{pq}$ for edges $(p,q) \in \lF\times \rF$ and 0 for other edges.
The optimal $\M$ and $\labelvars$ for GAP can be obtained from MCMF flow $\flow^*$ for $\graph$ as $x_{pq}=\flow^*(p,q)$ for all $(p,q) \in \lF\times \rF$ and $f_p\!=\!\underset{{h\in\labelset}}{arg \min}\;D_{pq}(\theta_h)$ if $p,q$ are matched, $x_{pq}\!=\!1$.

{\small \paragraph{Solving a Series of GAPs:}}
We propose $O(n^2)$ method for solving MCMF corresponding to a modified LAP~\eqref{eq:rgap}-\eqref{eq:121Constrains_M_F}
after changing one or all edge costs associated with one feature in $\lF$.
Assume MCMF $\flow$ for $\graph$ and node potential function $\pi$ that satisfy the reduced costs non-negativity constraints on the residual
graph $\graph_\flow$.
Changing edge costs associated with feature $p$ may violate reduced cost non-negativity constraints involving $p$.
To regain feasibility after dropping the no longer needed artificial nodes $s$ and $t$ and their edges, we reverse the flow through $(p,q)$ where $p$ and $q$ are matched by $\flow$ and update $\pi(p)$
$$\pi(p)= \min c(p,v)+\pi(v)\;\;\;\; \forall v \in \rF.$$ Finally, we push one unit of flow from $p$ to $q$, i.e.~find the shortest path w.r.t.~$c^\pi$, to maximize the flow. The reduced cost optimally theorem \cite{networkflows93} grantees that the resulting flow is MCMF.
In case $m$ features in $\lF$ had their associated costs changed, the new MCMF could be found in $O(mn^2)$ by
applying the steps above sequentially to each feature.
These steps could be used with any LAP \cite{AP:Burkard} or MCMF solver not just SSP. Given an optimal solution for LAP~\eqref{eq:rgap}-\eqref{eq:121Constrains_M_F}, it
is possible to compute the optimal node potentials that satisfy reduced cost non-negativity constraints in polynomial time \cite{networkflows93}. Given a MCMF $\flow$, first we build the residual graph $\graph_\flow$ which does not contain any negative cycles otherwise $\flow$ is not MCMF. Then we compute the shortest distance $d$, w.r.t.~the edge costs $c$, between a node in $\graph_\flow$ and all the other nodes using Bellman and Ford. Notice that the range of the edge $c$ function of $\graph_\flow$  is not guaranteed to be non-negative. However, $\graph_\flow$ contains no negative cost cycles otherwise of $\flow$ is not MCMF\footnote{Notice that if there exists a negative cost cycle we could augment the flow along that cycle and reduce the flow cost.}. The distance $d$ is well defined since there are no negative cost cycles in $\graph_\flow$ thus $d(w) \leq d(v) + c (v,w)$  for all edges $(v,w)$ in $\graph_\flow$. We could defined the nodes' potentials as $\pi=-d$ thus
\begin{align*}-\pi(w) & \leq -\pi(v) + c(v,w)\\
                    0 &\leq c(v,w)-\pi(v) +\pi(w).
\end{align*}

On a final note, a sparse weighted bipartite matching solver is the best method to solve LAP~\eqref{eq:rgap}-\eqref{eq:121Constrains_M_F} after removing edges with cost $T$, i.e.~outliers/occlusions. However, the flow recycling should be modified accordingly to cope with an incomplete bipartite graph. In SSP, the node potentials are updated using a well defined the shortest distance function $d$,~i.e. $d(w) \leq d(v) + c (v,w)$  for all edges $(v,w)$ in $\graph$. If two adjacent nodes are not reachable from the node that we are computing the distance to, i.e.~their distances are infinity, then distance function is no longer well defined.

\subsubsection{Solving GAP Directly}
\label{sc:gap_no_reduction}
Energy~\eqref{eq:lc_givenParams}-\eqref{eq:121Constrains_M_F} could be re-parametrized and written in the following form
\begin{eqnarray}
E(\M_\labelvars) = &\underset{h\in\labelset}{\sum}\; \underset{p \in \lF}{\sum} \; \underset{q \in \rF}{\sum}D_{pq}(\param_h) x_{pqh} +  \beta \sum_{h \in\labelset} \delta_h(\M_\labelvars) \label{eq:lc_givenParams_MF} \\
\textrm{s.t.}\quad   & \left.\begin{array}{ll}
                         \underset{h\in\labelset}{\sum}\;\underset{ p \in \lF}{\sum} x_{pqh} =1  & \forall q \in \rF \\
                         \underset{h\in\labelset}{\sum}\;\underset{ q \in \rF}{\sum} x_{pqh} =1  & \forall p \in \lF \\
                         x_{pqh} \in \{0,1\} & \forall h \in \labelset,\; p \in \lF,\; q \in \rF
                        \end{array}\right\}\label{eq:121Constrains_MF}
\end{eqnarray}
where binary variable $x_{pqh}$ is 1 if $p$ and $q$ are matched to each other and assigned to model $h$, and 0 otherwise. Matching $\M_\labelvars$ is defined as $\{x_{pqh}\;|\;(p,q,h) \in \lF\times \rF \times \labelset\}$ encapsulating information of both feature-to-feature and match-to-model assignments, and $\delta_h$ is now defined as $\delta_h(\M_\labelvars) = [\exists p\in \lF,\; q\in \rF: x_{pqh}=1]$.
To formulate GAP as MCMF problem we build graph $\graph^*=(\nodes,\edges)$, see Fig.~\ref{fig:MCMF_Config}(a), with the set of nodes
\begin{align*}
    \nodes=&\{ s,t\} \cup \{n_p \;|\; p\in \lF\} \cup \{n_q \;|\;q\in \rF\} \; \cup\\
          &\{n_{ph} \;|\; p\in \lF, h\in \labelset \} \cup \{n_{qh} \;| \;q\in \rF, h\in \labelset\},
\end{align*}
the set of edges
$$\begin{matrix*}[l]
    \edges=&\{(s,n_p) &|\; p\in \lF\} \; \cup\\
           &\{(n_p,n_{ph}) &|\; p\in \lF,h\in \labelset\} \; \cup\\
           &\{(n_{ph},n_{qh}) &|\;p\in \lF,q\in \rF, h\in \labelset \} \;\cup\\
           &\{(n_{qh},n_q) &|\;q\in \rF,h\in \labelset\} \; \cup\\
           &\{(n_q,t) &|\; q\in \rF\},
\end{matrix*}$$
and the following edge capacity $u$ and edge cost $c$ functions
\begin{align*}
u(v,w) =&1 \quad \quad \quad \quad \;\;\; \text{for}\;\;(v,w) \in \edges \\
c(v,w) =&
\begin{dcases*}
D_{pq}(\param_h) & for $(v,w) \in \{ (n_{ph},n_{qh})\; |\; p\in \lF, q\in \rF,  h\in \labelset\}$\\
0 & otherwise.
\end{dcases*}
\end{align*}
\begin{lemma}
\label{lm:mcmf}
The optimal solution for a feasible GAP (eq.~\eqref{eq:lc_givenParams_MF}-\eqref{eq:121Constrains_MF} for $\beta\!=\!0$)
is
$$\M_{\labelvar}=\{x_{pqh}=\mathbf{F^*}(n_{ph},n_{qh}) \;|\;  p\in \lF,q\in \rF, h\in \labelset\}$$
where $\mathbf{F^*}:\edges \rightarrow \{\text{0},\text{1}\}$ is the MCMF over graph $\graph^*$.
\end{lemma}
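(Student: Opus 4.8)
The plan is to establish a bijection between feasible integral flows on $\graph^*$ and feasible solutions of GAP, under which flow cost equals the GAP objective (with $\beta=0$); the lemma then follows immediately by matching optimal flow to optimal GAP solution. First I would observe that every edge of $\graph^*$ has capacity $1$, and that the only path from $s$ to $t$ has the form $s \to n_p \to n_{ph} \to n_{qh} \to n_q \to t$ for some $p\in\lF$, $q\in\rF$, $h\in\labelset$. Since the only edge on such a path carrying a nonzero cost is $(n_{ph},n_{qh})$ with cost $D_{pq}(\param_h)$, a unit of flow along this path contributes exactly $D_{pq}(\param_h)$ to $cost(\mathbf{F})$. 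Setting $x_{pqh}=\mathbf{F}(n_{ph},n_{qh})$, flow conservation at $n_{ph}$ and $n_{qh}$ forces $\mathbf{F}(n_p,n_{ph})=\sum_{q}x_{pqh}$ and $\mathbf{F}(n_{qh},n_q)=\sum_{p}x_{pqh}$, so $cost(\mathbf{F})=\sum_{h}\sum_{p}\sum_{q}D_{pq}(\param_h)\,x_{pqh}$, which is exactly the GAP objective when $\beta=0$.

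Next I would match the flow value to the GAP constraints. Flow conservation at $n_p$ gives $\mathbf{F}(s,n_p)=\sum_{h}\mathbf{F}(n_p,n_{ph})=\sum_{h}\sum_{q}x_{pqh}$, and the capacity $u(s,n_p)=1$ bounds this by $1$; similarly $\mathbf{F}(n_q,t)=\sum_{h}\sum_{p}x_{pqh}\le 1$. A maximum flow from $s$ to $t$ must saturate all source and sink edges whenever $|\lF|=|\rF|$ (the no-occlusion assumption), i.e. $\mathbf{F}(s,n_p)=1$ for all $p$ and $\mathbf{F}(n_q,t)=1$ for all $q$; this is exactly where feasibility of GAP is used, since GAP is feasible precisely when such a perfect assignment exists, and by König/Hall the max flow value then equals $|\lF|=|\rF|$. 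Thus $\sum_{h}\sum_{q}x_{pqh}=1$ for all $p$ and $\sum_{h}\sum_{p}x_{pqh}=1$ for all $q$, recovering constraints~\eqref{eq:121Constrains_MF}. Integrality of $\mathbf{F^*}$, hence of the $x_{pqh}$, follows from total unimodularity of the node-arc incidence matrix of $\graph^*$ together with integral capacities (the same fact underlying Appendix~\ref{ap:TMproof}); alternatively one invokes the integral flow theorem directly.

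Finally I would close the argument: the correspondence $\mathbf{F}\mapsto\{x_{pqh}=\mathbf{F}(n_{ph},n_{qh})\}$ is a cost-preserving bijection between maximum integral $s$–$t$ flows on $\graph^*$ and feasible integral GAP solutions, so a cost-minimizing max flow $\mathbf{F^*}$ yields a cost-minimizing GAP solution, which is precisely the claimed $\M_{\labelvar}$. The main obstacle I anticipate is the bookkeeping around maximality and feasibility: one must argue carefully that a genuine MCMF (maximum in value, then minimum in cost) does saturate every source/sink edge so that the induced $x_{pqh}$ satisfy the equality constraints rather than merely the relaxed inequalities — this is where the "feasible GAP" hypothesis is essential, since without it the max flow value drops below $|\lF|$ and no valid labeling of all features exists. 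The cost-equality computation and the integrality claim are routine by contrast.
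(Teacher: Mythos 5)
Your proposal is correct and follows essentially the same route as the paper's proof of Theorem~\ref{th:gap2mcmf} and Corollary~\ref{cr:F_GAP_cost}: the same correspondence $x_{pqh}=\mathbf{F}(n_{ph},n_{qh})$, the same observation that only the $(n_{ph},n_{qh})$ edges carry cost so that flow cost equals the GAP objective, and the same use of unit capacities, flow conservation, and saturation of the source/sink edges to recover the one-to-one constraints. The only differences are presentational --- you phrase it as a direct cost-preserving bijection between maximum integral flows and feasible GAP solutions where the paper argues both directions by contradiction, and you make the integrality of $\mathbf{F^*}$ explicit, which the paper leaves implicit in the statement of Lemma~\ref{lm:mcmf}.
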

Using Lemma~\ref{lm:mcmf}, see proof Appendix~\ref{ap:mcmfproof}, a GAP solution $\M_\labelvars$ could be found by using an efficient MCMF algorithm~\cite{goldberg92MCMF}.
\begin{figure}[H]
\begin{center}
\begin{tabular}{cc}
\includegraphics[width=0.45\textwidth]{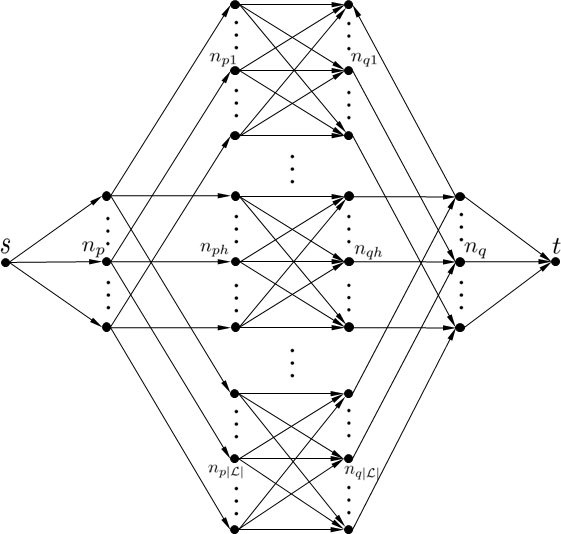}&
\includegraphics[width=0.45\textwidth]{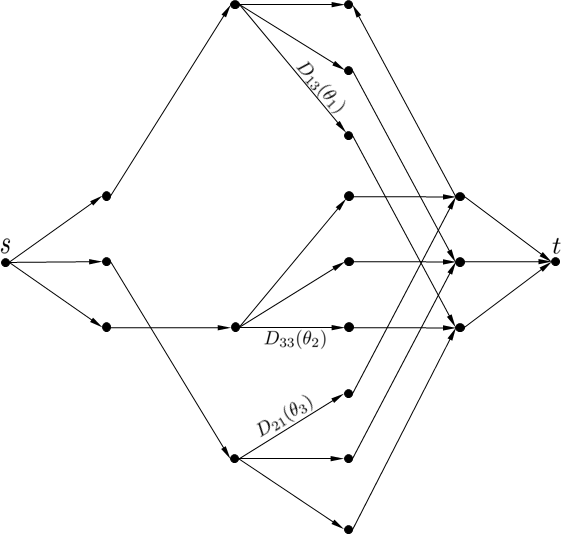}\\
{\small (a) $\graph^*$ of a generic GAP instance} &
{\small (b) Example of LC-MCMF $\graph^*_\labelvars$}
\end{tabular}
\end{center}
\vspace{-0.2in}
\caption{{\small Figure~(a) shows the generalized graph construction $\graph^*$ of a generic GAP instance---with unit capacity edges and edge cost function $c(v,w) = D_{pq}(\param_h)$ for all $(v,w) \in \{ (n_{ph},n_{qh})\;|\;\forall p\in \lF, q\in \rF,  h\in \labelset\}$ and $0$ otherwise. This construction does not assume that $|\lF|=|\rF|$. Figure~(b) shows $\graph^*_\labelvars$ of a GAP with $|\lF|=|\rF|=3$ and $|\labelset|=3$ under the labelling constraint $\labelvars=[1\; 3 \; 2]$.}}
\label{fig:MCMF_Config}
\end{figure}
\subsection{Local Search-GAP (LS-GAP)}
\label{sc:ls-gap}
Now we introduce a local search algorithm that solves regularized GAP \eqref{eq:lc_givenParams}-\eqref{eq:121Constrains_M_F} with $\beta>0$ using GAP algorithm in Section~\ref{sc:gap_via_reduction} as a sub-procedure. Assume that $\labelset$ is the current set of possible models\footnote{In practice, $\labelset$ is restricted to be the set of models that are assigned to at least one matched pair of features in energy~\eqref{eq:lc_givenM} solution.}.
Let $\labelset_c$ be an arbitrary subset of $\labelset$ and $\M_\labelvars(\labelset_c)$ denote the GAP solution when the label space is restricted to $\labelset_c$. Note that GAP ignores the label cost term in~\eqref{eq:lc_givenParams} but we could easily evaluate energy~\eqref{eq:lc_givenParams} for $\M_\labelvars(\labelset_c)$.
The proposed LS-GAP algorithm greedily searches over different subsets $\labelset_c$ for one such that $\M_\labelvars(\labelset_c)$ has the lowest value of energy~\eqref{eq:lc_givenParams}.
Our motivation to search for minima of energy~\eqref{eq:lc_givenParams}-\eqref{eq:121Constrains_M_F} only among GAP solutions comes from an obvious observation that a global minima of~\eqref{eq:lc_givenParams}-\eqref{eq:121Constrains_M_F} must also solve the GAP if the label space is restricted to a right subset of $\labelset$.

We define sets of all possible {\it add}, {\it delete} and {\it swap} combinatorial search moves as
\begin{eqnarray*}
\N^a(\labelset_c)&=&\cup_{h \in \labelset\setminus \labelset_c} \{\labelset_c \cup h\}\\
\N^d(\labelset_c)&=&\cup_{h \in\labelset_c}\{\labelset_c \setminus h\}\\
\N^s(\labelset_c)&=&\cup_{\substack{h\in\labelset_c\\\ell \in \labelset\setminus \labelset_c}}\{\labelset_c \cup \ell \setminus h\}.
\end{eqnarray*}
These are three different local neighbourhoods around $\labelset_c$. We also define a larger neighbourhood $\N^\star$
around $\labelset_c$ which is the union of the above
\begin{eqnarray*}
\N^\star(\labelset_c)  &=&\N^a(\labelset_c) \cup \N^d(\labelset_c) \cup \N^s(\labelset_c).
\end{eqnarray*}
LS-GAP uses a combination of {\it add}, {\it delete} and {\it swap} moves, as in \cite{LS:UFLP}, to greedily find a set of labels near current set $\labelset_t$ that is better w.r.t.~energy~\eqref{eq:lc_givenParams}.

\vspace{0.1in}
\begin{algorithm}[H]
{\small
\DontPrintSemicolon
\hspace{0.2in} $\labelset_t \gets \phi$\;
\hspace{0.2in} $\N_{t} \gets \N^\star(\labelset_t)$\;
\textbf{while}$\;\exists \; \labelset_c \in \N_{t}$\;
\hspace{0.2in} \textbf{if} \text{energy~\eqref{eq:lc_givenParams} of $\M_\labelvars(\labelset_c)$} $<$ \text{energy~\eqref{eq:lc_givenParams} of $\M_\labelvars(\labelset_t)$}\;
\hspace{0.4in} $\labelset_t \gets \labelset_c$\;
\hspace{0.4in} $\N_{t} \gets \N^\star(\labelset_t)$\;
\hspace{0.2in} \textbf{else}\;
\hspace{0.4in} $\N_{t}\gets \N_{t}\setminus \labelset_c$\;
\textbf{return} the GAP solution $\M_\labelvars(\labelset_t)$\;
}\NoCaptionOfAlgo
\caption*{{\bf LS-GAP}}
\end{algorithm}
\vspace{0.1in}

In LS-GAP, we initially set $\labelset_t$  to $\phi$ but it could be any arbitrary subset of $\labelset$. Then $\N^\star(\labelset_t)$ is searched for a move with a GAP solution $\M_\labelvars(\labelset_c)$ of a lower energy \eqref{eq:lc_givenParams} than the current one, i.e $\M_\labelvars(\labelset_t)$. Such a move is accepted if it exists and $\labelset_t$ is updated accordingly otherwise LS-GAP terminates. LS-GAP will definitely converge since energy \eqref{eq:lc_givenParams} is lower bounded and $\labelset$ is finite.

To speedup our LS-GAP implementation, we construct a single graph containing all the models (not just the subset appearing in a particular GAP instance). Each particular GAP instance is solved by modifying the edge weights accordingly---edges weights $({n_p,n_{ph}})$ of any model $h$ not in the GAP instance are set to infinity. Having a single construction allows to reuse the flow (solution) from a previously solved GAP to solve the next GAP instance faster. Finally, this construction requires $O(|\labelset|)$ more space and it is slower to solve than the construction described in Section~\ref{sc:gap_via_reduction}.

\subsection{Heuristic-GAP (H-GAP)}
\label{sc:H-GAP}
Now we introduce another greedy algorithm that solves regularized GAP \eqref{eq:lc_givenParams}-\eqref{eq:121Constrains_M_F} with $\beta>0$ using GAP algorithm in Section~\ref{sc:gap_via_reduction} as a sub-procedure. Assume that $\labelset$ is the current set of randomly sampled models.
Let $\labelset_c$ be an arbitrary subset of $\labelset$ and $\M_\labelvars(\labelset_c)$ denote the GAP solution when the label space is restricted to $\labelset_c$. H-GAP terminates after at most $O(|\labelset|^2)$ iterations. We did not experiment with H-GAP.

\vspace{0.1in}
\begin{algorithm}[H]
{\small
\DontPrintSemicolon
\hspace{0.0in} $\labelset_t \gets \phi$\;
\textbf{while}$\;\exists \; \ell \notin \labelset_t$ such that energy~\eqref{eq:lc_givenParams} of $\M_\labelvars(\ell \cup \labelset_t)$ $<$ energy~\eqref{eq:lc_givenParams} of $\M_\labelvars(\labelset_t)$\;
\hspace{0.2in} $\ell = \underset{k \in \{\labelset-\labelset_t\}}{arg\min}$ \text{energy~\eqref{eq:lc_givenParams} of $\M_\labelvars(k \cup \labelset_t)$}\;
\hspace{0.2in} $\labelset_t \gets \{\ell \cup \labelset_t\}$\;
\textbf{end}\;
\textbf{return} the GAP solution $\M_\labelvars(\labelset_t)$\;
}\NoCaptionOfAlgo
\caption*{{\bf H-GAP}}
\end{algorithm}
\vspace{0.1in}

\subsection{Label Constrained-GAP (LC-GAP)}
\label{sc:lc-gap}
LC-GAP solves a GAP instance with fixed labelling $\labelvars$, i.e. each left feature $p$ must be assigned to a predefined model $\labelvar_p$. LC-GAP uses a slightly different graph construction than $\graph^*$ that enforces the required labelling constraints. The graph construction corresponding to a GAP instance under labelling $f$ constraints is $\graph^*_\labelvars=(\nodes,\edges)$ where
\begin{align*}
    \nodes=&\{ s,t\} \cup \{n_p |\forall p\in \lF\} \\
           & \cup \{n_q |\forall q\in \rF\} \; \cup \{n_{p\labelvar_p} |\forall p\in \lF\}\\
          & \cup \{n_{qh} |\forall q\in \rF, h\in \labelset\}
\end{align*}
and $\edges$, capacity function $u$ and cost function $c$ are as defined as in $\graph^*$ provided that both edge nodes exist in $\nodes$ of $\graph^*_\labelvars$, see example in Fig.~\ref{fig:MCMF_Config}(b).

\section{Ground truth}

The ground truth is computed by first manually identifying and segmenting regions corresponding to separate models (planes/homographies), see Fig.~\ref{fig:grndtruth_sample}.
Then, we compute an optimal matching of extracted features inside each identified pair of corresponding regions with respect to the geometric fitting error and appearance. This method is similar to the one used in \cite{YanKe:CVPR2004,Mikolajczyk:CVPR2003}. 

Below we describe our technique for computing the ``ground truth matching'' for each model with manually identified spatial support, as illustrated in Fig.~\ref{fig:grndtruth_sample}. We compute sets of SIFT features $\subL$ and $\subR$ inside each pair of manually identified corresponding regions, see Fig.~\ref{fig:correspondingSets}.
It is possible to independently fit one homography to each pair of corresponding sets $\{\subL,S_r\}$.
For simplicity, we first assume that there are no occlusions, i.e if a feature appears in the left image then its corresponding feature appears in the right image and vise versa.
Thus, the number of left image features equals the number of right image features.
We will show how to deal with occlusions later.

The SIFT features of two corresponding sets, namely $\subL$ and $\subR$ see Fig.~\ref{fig:correspondingSets}, are matched using the criteria described in \cite{sift:04}. Then we use \RANSAC ~\cite{ransac:81} to find a homography $\param_h$ that maximizes the number of inliers between the features in two corresponding regions.
Using \RANSAC~in this case is not problematic since features in $\subL$ and $\subR$ support only one homography/model. This homography is only used as an initial guess in finding the ground truth model.

\begin{figure}[H]
\begin{center}
\begin{tabular}{cc}
\includegraphics[width=0.45\textwidth]{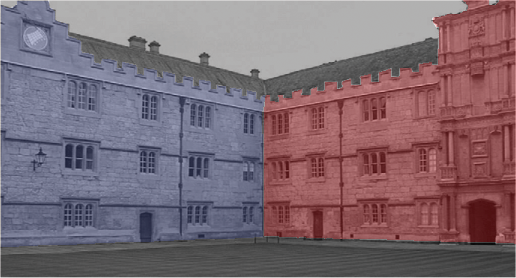}&
\includegraphics[width=0.45\textwidth]{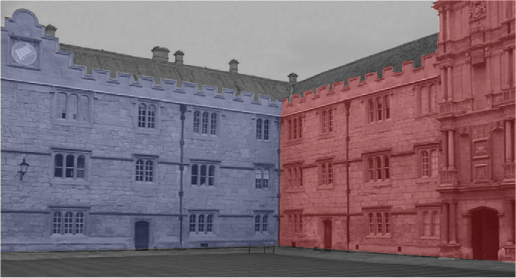}
\end{tabular}
\end{center}
\vspace{-2em}
\caption{{\small In this example we identified only two planes
. The manually identified corresponding support regions for these two models are shown in blue and red.}}
\label{fig:grndtruth_sample}
\end{figure}

\begin{figure}[h]
\begin{center}
\begin{tabular}{cc}
\includegraphics[width=0.40\textwidth]{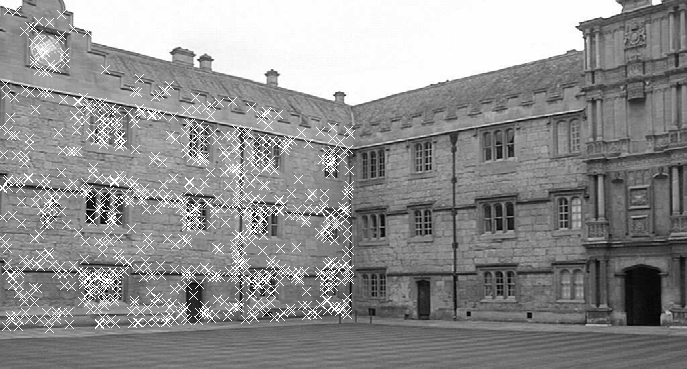}&
\includegraphics[width=0.40\textwidth]{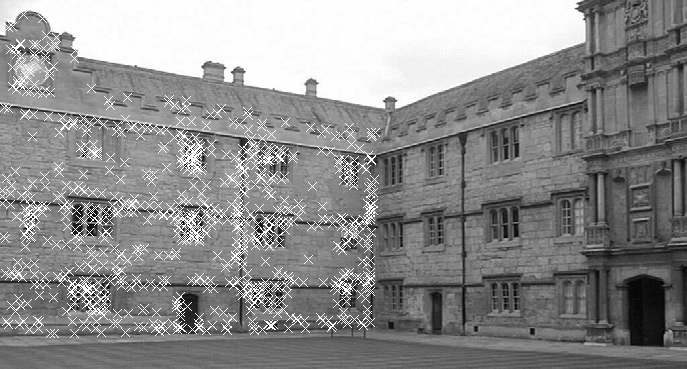}\\
\multicolumn{2}{c}{\small (a) Example of corresponding sets $\subL$ and $\subR$  supporting the blue model in Fig.~\ref{fig:grndtruth_sample}.}\\[1ex]
\includegraphics[width=0.40\textwidth]{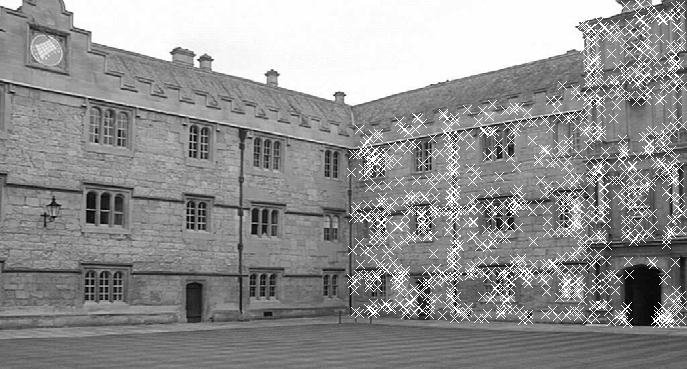}&
\includegraphics[width=0.40\textwidth]{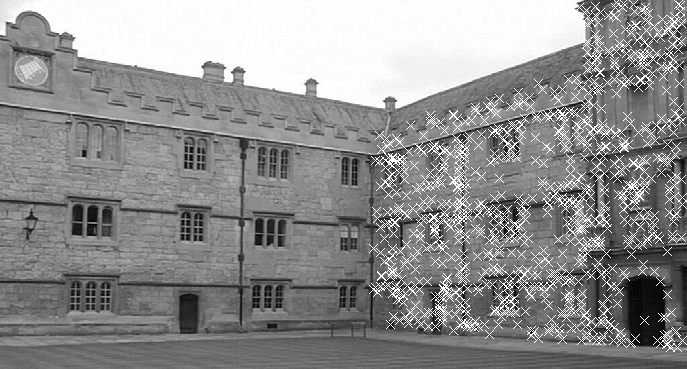}\\
\multicolumn{2}{c}{\small (b) Example of corresponding sets $\subL$ and $\subR$  supporting the red model in Fig.~\ref{fig:grndtruth_sample}.}\\[1ex]
\end{tabular}
\end{center}
\vspace{-2em}
\caption{{\small Two examples of the corresponding sets of features $\subL$ and $\subR$ supporting the blue (a) and red (b) models.}}
\label{fig:correspondingSets}
\end{figure}

Given a homography $\param_h$, the problem of finding an optimal one-to-one matching that minimizes the total sum of matching scores between the left and right features in two corresponding regions could be formulated as an assignment problem
\begin{eqnarray}
\mathbf{AP:}\quad \underset{\M}{\arg \min}\quad  &\underset{p \in \subL}{\sum} \; \underset{q \in \subR}{\sum} \unary_{pq}(\param_h)\;x_{pq}& \label{eq:gtobjctive}\\
\textrm{s.t.}\quad   & \underset{ p \in \subL}{\sum} x_{pq} =1  & \forall q \in \subR \nonumber \\
                     & \underset{ q \in \subR}{\sum} x_{pq} =1  & \forall p \in \subL \nonumber \\
                     & x_{pq} \in \{0,1\} &\forall p \in \subL,\; \forall q \in \subR. \nonumber
\end{eqnarray}
The two linear constraints in AP enforce one-to-one correspondence between the features in $\subR$ and $\subL$, see Fig.~\ref{fig:ModlingOcclusion}(a).

For any fixed matching $\M$ the appearance term $ \sum_{p \in \subL}\sum_{q \in \subR} Q(p,q)\;x_{pq}$ in AP's objective function becomes constant. After finding an optimal $\M$ for AP, we could further decrease the objective value by re-estimating homography $\param_h$ minimizing the geometric error, e.g. see first term in~\eqref{eq:symmatchingscore}, over all the currently matched features. We can continue to iteratively re-estimate matching $\M$ and homography $\param_h$ until the objective value of AP could not be reduced any more.

The described optimization procedure maybe sensitive to the initial homography found by \RANSAC. In an effort to reduce such sensitivity we repeat the whole procedure several times, and report as ground truth matching $\M$ and model $\param_h$ that have the lowest value of the objective function of AP.

Now we can discuss possible occlusions that we ignored so far. The presence of occlusions or outliers introduces two problems. First, the number of features in corresponding regions are no longer guaranteed to be the same. Such an imbalance between the features has to be addressed in order to enforce one-to-one correspondence. Second, we can no longer assume that there exists one homography that fits all features.

\begin{figure}
\begin{center}
\begin{tabular}{p{0.45\textwidth}p{0.45\textwidth}}
\multicolumn{1}{c}{\includegraphics[height=0.3\textheight]{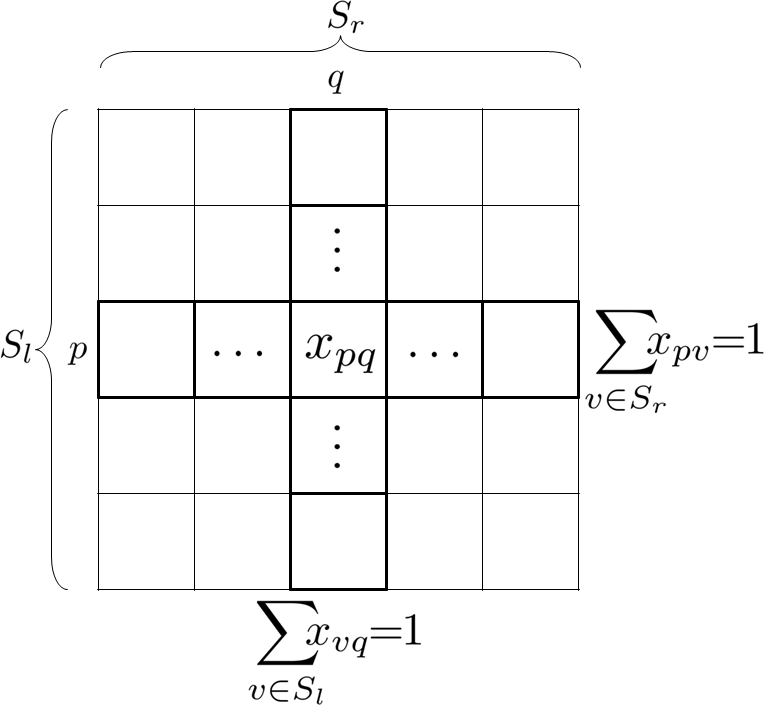}}&
\multicolumn{1}{c}{\includegraphics[height=0.3\textheight]{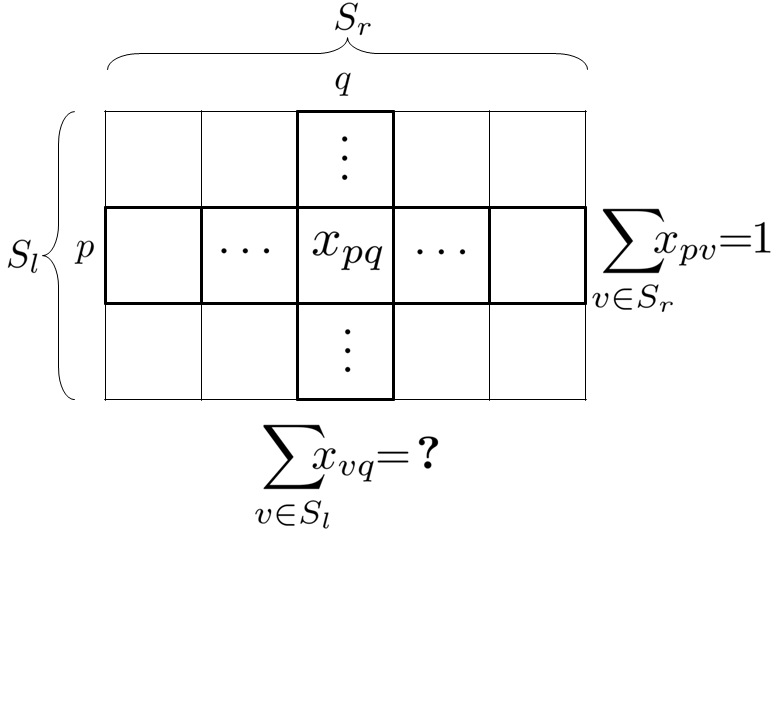}}\\
\multicolumn{1}{c}{{\small (a) No occlusions.}} &
\multicolumn{1}{c}{{\small(b) Imbalance between $\subL$ and $\subR$.}}\\
\multicolumn{1}{c}{\includegraphics[height=0.3\textheight]{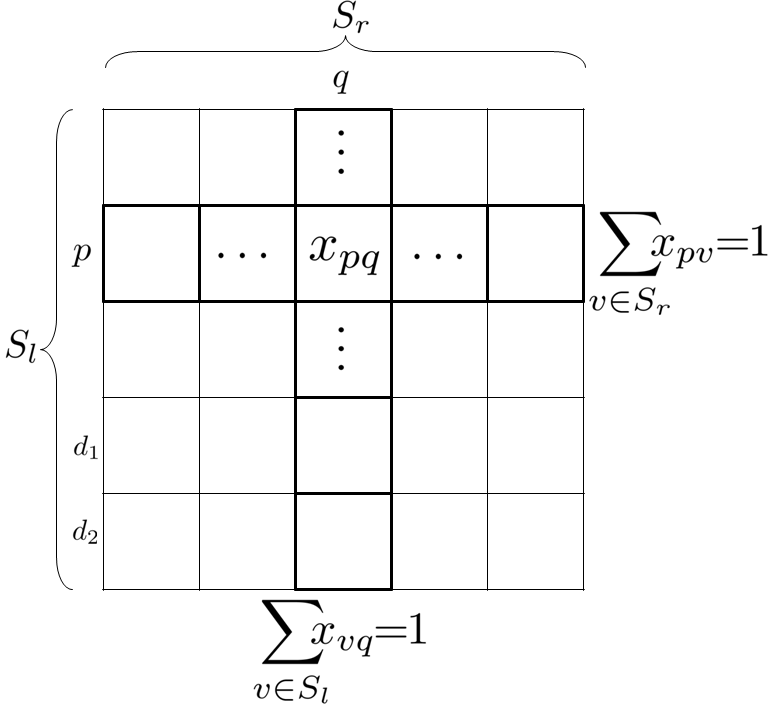}}&
\multicolumn{1}{c}{\includegraphics[height=0.3\textheight]{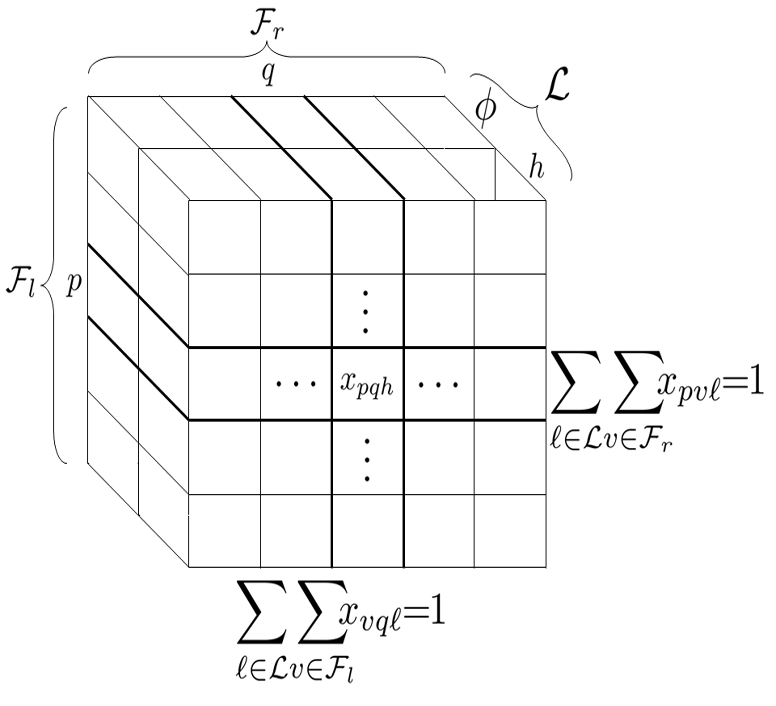}}\\
\multicolumn{1}{c}{{\small (c) Balancing with dummy features.}} &
\multicolumn{1}{c}{{\small (d) GAP with occlusion  model $\Phi$.}}
\end{tabular}
\end{center}
\vspace{-2em}
\caption{{\small Figure (a) shows the straight forward case, no occlusions, for enforcing the one-to-one correspondences. Notice that in this case the number of features in both images is the same and therefore the one-to-one correspondences constraints are balanced. Figure (b) shows a case with unbalanced one-to-one correspondence constraints, i.e there are no enough features in $S_l$ to enforce the one-to-one correspondence constraints. And, figure (c) shows how to balanced the one-to-one correspondence constraints by introducing the dummy feature $d$. Figure (d) shows how to account for occlusions, using occlusion model $\phi$, in case of balanced one-to-one correspondence constraints.}}
\label{fig:ModlingOcclusion}
\end{figure}

To balance out any possible difference between the sizes of sets $\subL$ and $\subR$ we use dummy features with a constant matching cost penalty, as detailed below. Without loss of generality we can assume that the number of extracted features in the left region is less than or equal to those in the right region\footnote{We could always swap the regions to satisfy that assumption as long as the used geometric error and appearance measures are symmetric.}. In this case, there are at least $|\subR|-|\subL|$ occluded features in the right image. As illustrated in Fig.~\ref{fig:ModlingOcclusion}(b), an imbalance between the number of features renders the one-to-one correspondence constraints in AP infeasible. One way to overcome this problem is to add $|\subR|-|\subL|$ dummy features to set $\subL$, see Fig.~\ref{fig:ModlingOcclusion}(c). We define a fixed matching cost penalty $\unary_{dq}=T$ for assigning any dummy feature $d$ to any $q$ in $\subR$. It is also possible to use only one dummy feature $d$ in $\subL$ but for that specific feature constraint $\sum_{q\in \subR} x_{dq}=1$ would have to be replaced by $\sum_{q \in \subR} x_{dq}=|\subR| - |\subL\backslash\{d\}|$. We adapt the first approach with multiple dummy features only to simplify our notation and avoid the special handling of feature $d$. The use of dummy feature/entity is a common technique for balancing out unbalanced assignment problems in operations research \cite{OR:rao05}.

Even under the assumption that $|\subL|=|\subR|$ occlusions are possible and we can not assume that there exists a homography that fits all features. In order to make our approach robust to occlusions/outliers we use generalized the assignment problem, GAP, to allow each feature to choose between two models: a homography $\param_h$ and an occlusion model $\phi$ such that $\unary_{pq}(\phi)=T$ for any $ p \in \subL$ and $q \in \subR$. The use of an occlusion (or outlier) model with a uniformly distributed matching cost is a fairly common technique in Computer Vision \cite{LabelCosts:IJCV12,PEARL:IJCV12}, see Fig.~\ref{fig:ModlingOcclusion}(d).

\section{Evaluation}
\label{sc:evaluation}
In this section, we compare the matching quality of the EFM framework vs. standard SIFT matching \cite{sift:04}. Then we discuss some of the EFM framework properties, e.g. convergence rate and the effect of the initial set of proposals size on the matching quality. Finally, we compare the quality of the estimated models by the EFM framework to the models estimated by an EF algorithm~\PEARL \cite{LabelCosts:IJCV12}.

Our matching evaluation criterion is based on Receiver Operating Characteristics (ROC) of the True Positive Rate (TPR) vs.~the False Positive Rate (FPR). The ROC attributes for matching $\M$ and ground truth (GT) matching $\M_{GT}$ are defined as follows:
\begin{description*}
\item[{\small Positive}] (P) number of matches identified by $\M_{GT}$
\item[{\small Negative}] (N) number of potential matches that were rejected by $\M_{GT}$, \newline i.e.~$N=|\lF|\times|\rF|-P$
\item[{\small True Positive}] (TP) number of matches identified by $\M$ and $\M_{GT}$ (intersection)
\item[{\small False Positive}] (FP) number of matches identified by $\M$ but were rejected by $\M_{GT}$
\item[{\small True Positive Rate}] (TPR) $\frac{TP}{P}$
\item[{\small False Positive Rate}] (FPR) $\frac{FP}{N}$.
\end{description*}

Figure~\ref{fig:ROCcurve}(a) shows the ROC curve of standard SIFT matching achieved by varying the second best ratio (SBR) threshold where SBR is the ratio of distance between a left feature descriptor and the closest right features descriptor to the distance of the second closest. EFM is non-deterministic and the energy of convergence, a.k.a final energy, depends on the size of initial set of proposals $|\labelset|$. Therefore, for EFM we show a scatter plot that relates the ROC attributes to the final energy (colour coded) by varying $|\labelset|$. As can be see, EFM outperforms standard SIFT matching and the lower the final energy the better the matching quality. Furthermore, Fig.~\ref{fig:ROCcurve}(b) shows multiple histograms relating the final energy frequencies, of 50 runs, to $|\labelset|$ (colour coded). As can be seen, the bigger $|\labelset|$ is the more likely the final energy is going to be small and the more likely that EFM behaviour becomes more deterministic over different runs.

Figure~\ref{fig:EFMConvergence} shows the effect of EFM iterations on the energy with respect to time for different $|\labelset|$. For each $|\labelset|$ the experiment is repeated 50 times. On the average each iteration took 1 min., and most of the energy was reduced in the first three iterations. EFM converged on the average after 5 iterations. The plots in Fig.~\ref{fig:ROCcurve} and~\ref{fig:EFMConvergence} are shown for the Merton College example, in Fig.~\ref{fig:metronCollgeZoomed}, to illustrate the general characteristics/behaviour of our method. It will be meaningless to average these plots over multiple examples since they would not share the same energy scale, i.e. a low energy for one example could be high for another one.

\begin{figure}[h]
\begin{center}
\begin{tabular}{p{0.45\textwidth}p{0.45\textwidth}}
\includegraphics[height=0.25\textheight]{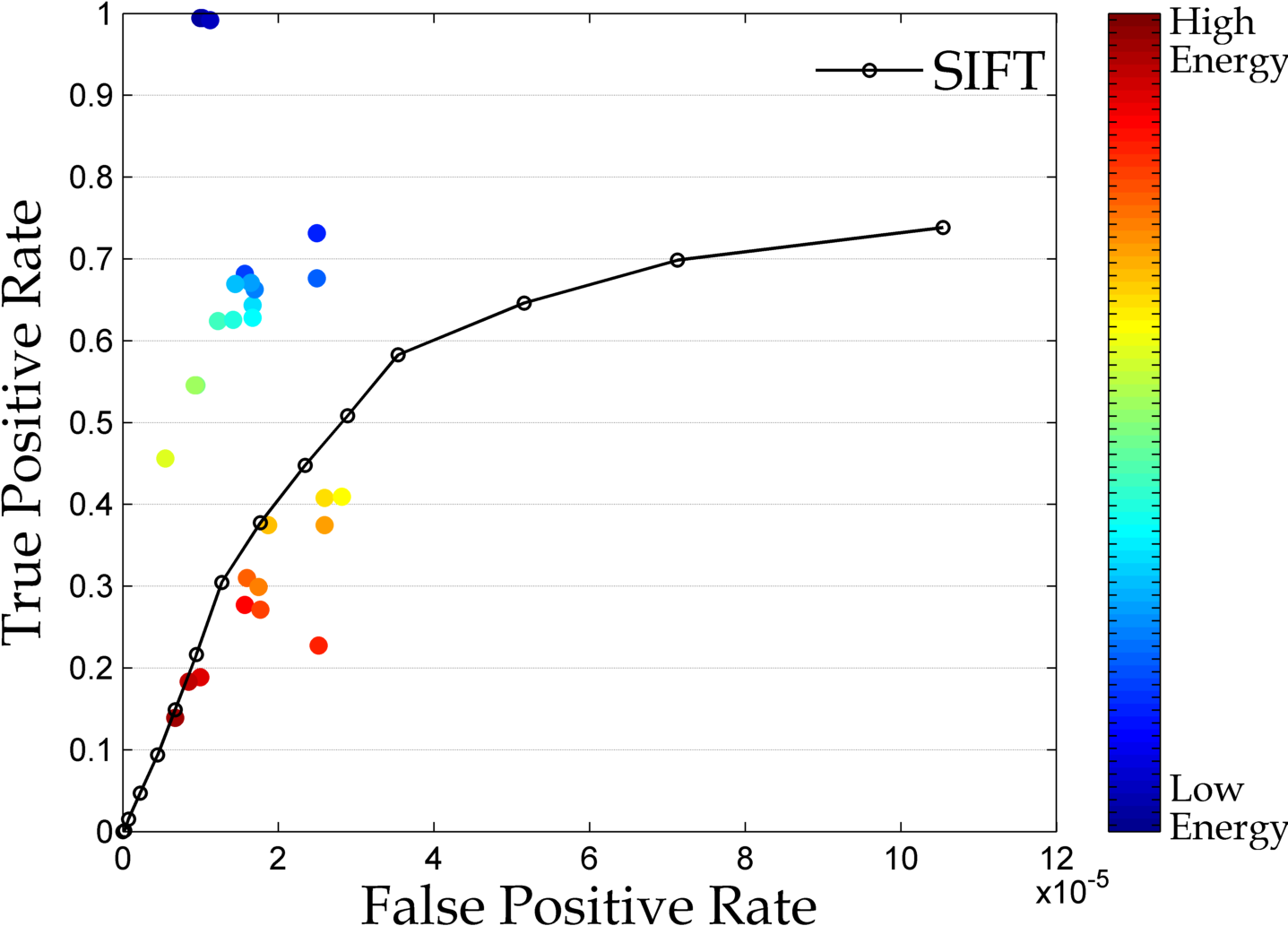}\vspace{1em}&
\includegraphics[height=0.25\textheight]{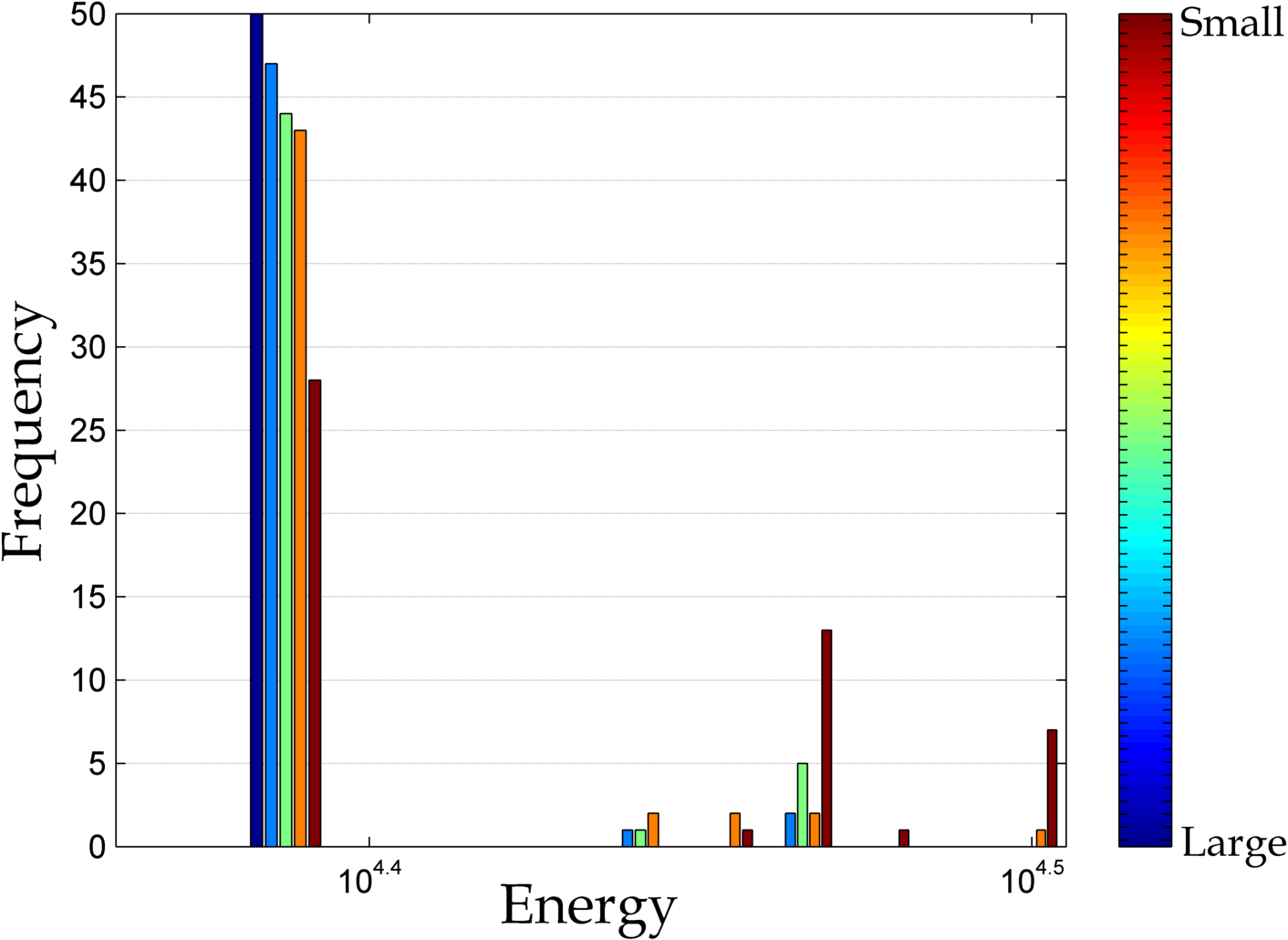}\\
{\small (a) EFM vs SIFT matching quality} & {\small (b) Effect of $|\labelset|$ on the final energy}\\
\end{tabular}
\end{center}
\vspace{-1em}
\caption{{\small Figure~(a) shows the ROC curve of the standard SIFT matches by varying the SBR threshold, and the scatter plot represents EFM results for different sizes of initial set or proposals. The scatter plot is colour coded to show relation between the achieved final energy and the quality of the matching, the lower the energy (blue) the better the matching. Figure~(b) shows multiple histograms of the final energies for different sizes of initial set of proposals---blue indicates a large initial set of proposals while red indicates a small set. The larger the size of the initial set of proposals the more likely that EFM will converge to a low energy.}}
\label{fig:ROCcurve}
\end{figure}

\begin{figure}[h]
\begin{center}
\includegraphics[height=0.35\textheight,width=0.65\textwidth]{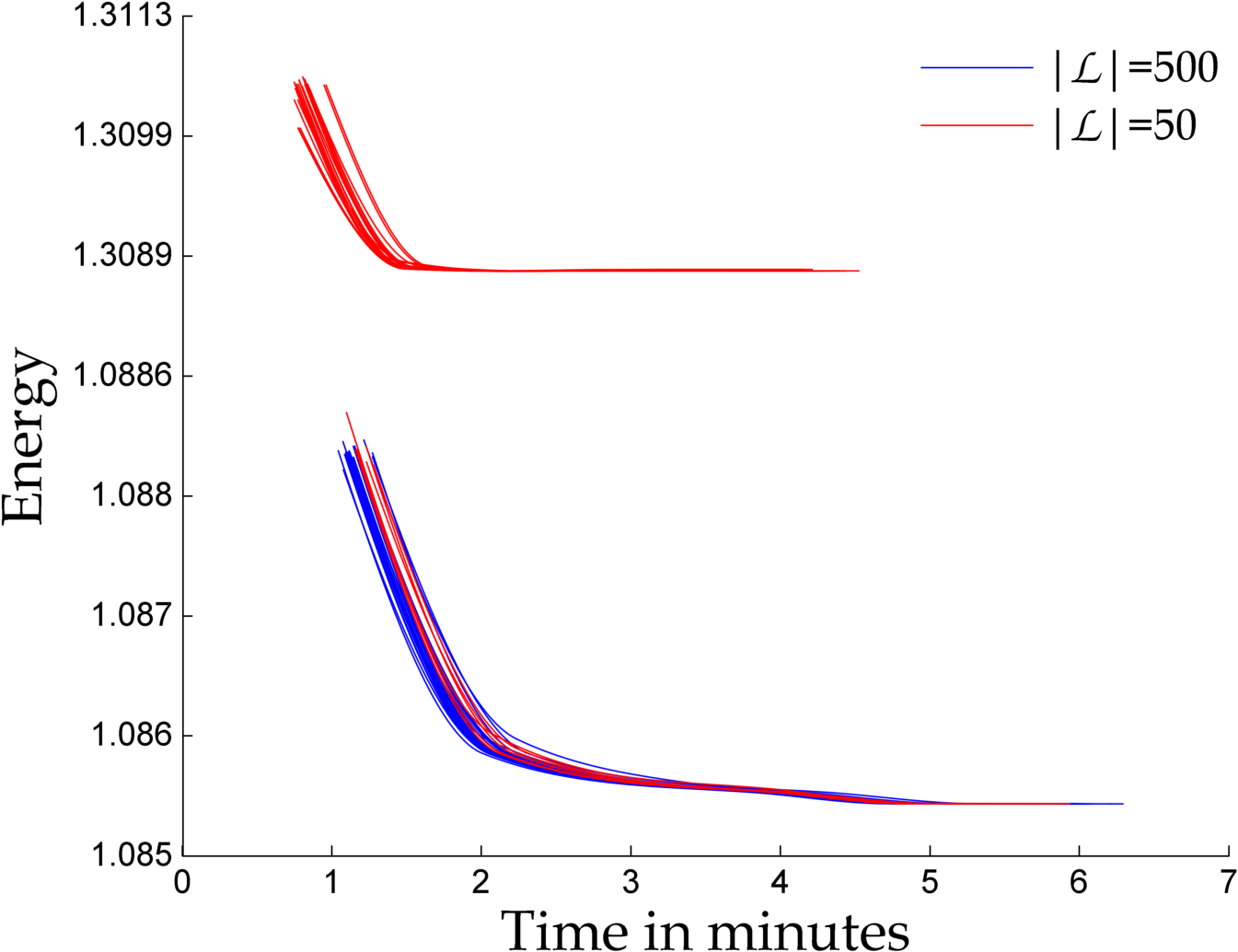}
\end{center}
\vspace{-1em}
\caption{{\small EFM energy over time in minutes. EFM converges on the average in 5 iterations, and an iteration on the average takes 1 minutes.}}
\label{fig:EFMConvergence}
\end{figure}

Figures~\ref{fig:metronCollgeZoomed}(a) and (b) show left features of EF \cite{LabelCosts:IJCV12} inliers  and left features of matches identified by EFM, respectively. Also, outliers or unmatched features are shown as x. EFM on the average found double the number of matches compared to using EF and SIFT standard matching. Figures~\ref{fig:metronCollgeZoomed}(c) and (d), and (e) and (f) are the zoom in for Segment 1 and Segment 2 in (a) and (b), respectively. Figures~\ref{fig:metronCollgeZoomed}(g) and (h) show the matching, over a small region, between the left and right images results of EF and EFM, respectively.

Figure~\ref{fig:multipleExamples} shows more results comparing EFM vs. EF and SIFT standard matching. In general EFM was able to find more matches than EF but EFM outperformed EF in two particular examples; the graphite example, shown in second row, in which large viewpoint between left and right images resulted in SIFT standard matching producing only 76 potential matches, and redbrick house example, shown in third row, in which repetitive texture of the bricks reduced the discriminate power of SIFT descriptor.

In order to evaluate the quality of the estimated model $\theta_h$, we will use the following geometric error ratio $GQ(\theta_h)$
$$GQ(\theta_h) := \frac{STE(\theta_h,\labelvars_{GT},\M_{GT})}{STE(\theta_{GT},\labelvars_{GT},\M_{GT})}$$ where $\labelvars_{GT}$ is the ground truth labelling and $STE(\param_h,\labelvar,\M)$ is the Symmetric Transfer Error of $\param_h$, i.e.~geometric error, computed for labelling $\labelvars$ and matching $\M$---the close $GQ(\theta_h)$ is to $1$ the better the model estimate.

Table~\ref{tbl:SBR} shows the effect of increasing viewpoint angle, between the left and right images, on the quality of model estimates. As the viewpoint increases the number of matched points by EF sharply decreases while for EFM the decrease is not as steep. In addition, EF becomes more sensitive to the used SBR threshold for increasing viewpoint, see variance for large
viewpoint. Furthermore, EFM archives near optimal matching, see TRP and FPR.

The used fitting threshold $T$ affects the ground truth, EFM and EF results, as it is a parameter for these methods. Table~\ref{tbl:thresholdEffect} shows the effect of increasing $T$ on EF and EFM. For the case of $T\leq1$, $T$ is underestimated and running the ground truth multiple times will result in similar final energies but slightly different matching. The more we decrease $T$ the more different the matchings will be. For $T\leq2$ and $T\leq3$ the ground truth result become more deterministic over multiple runs. Finally, when computing the ground truth for all the examples shown above we manually handed tuned $T$ to find the smallest $T$ that gives a stable ground truth over multiple runs.

\begin{figure}
\begin{center}
\begin{tabular}{p{0.45\textwidth}p{0.45\textwidth}}
\multicolumn{1}{c}{\includegraphics[height=0.15\textheight,width=0.40\textwidth]{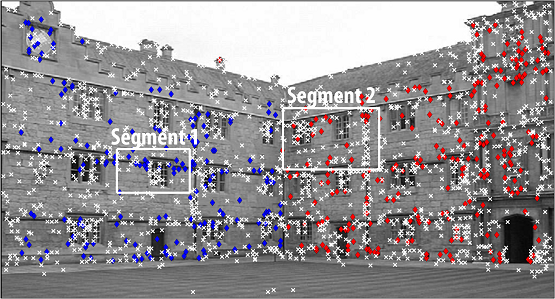}} &
\multicolumn{1}{c}{\includegraphics[height=0.15\textheight,width=0.40\textwidth]{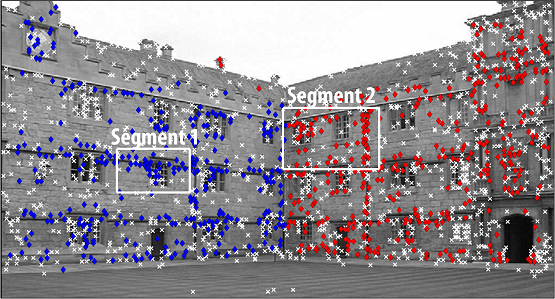}} \\
\multicolumn{1}{c}{{\small (a) EF left image result}} &
\multicolumn{1}{c}{{\small (b) EFM left image result}}\\
\multicolumn{1}{c}{\includegraphics[height=0.15\textheight]{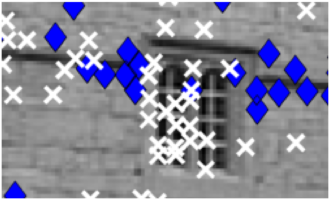}} &
\multicolumn{1}{c}{\includegraphics[height=0.15\textheight]{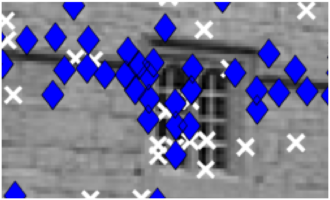}} \\
\multicolumn{1}{c}{{\small (c) Enlarged Segment 1 in (a)}} &
\multicolumn{1}{c}{{\small (d) Enlarged Segment 1 in (b)}}\\
\multicolumn{1}{c}{\includegraphics[height=0.15\textheight]{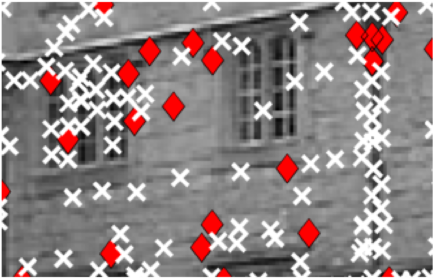}} &
\multicolumn{1}{c}{\includegraphics[height=0.15\textheight]{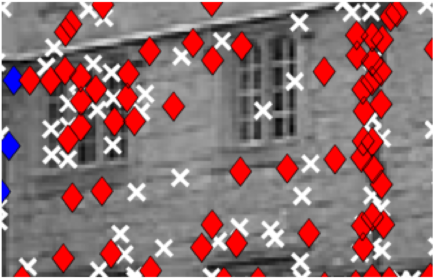}} \\
\multicolumn{1}{c}{{\small (e) Enlarged Segment 2 in (a)}} &
\multicolumn{1}{c}{{\small (f) Enlarged Segment 2 in (b)}}\\
\multicolumn{1}{c}{\includegraphics[height=0.20\textheight,width=0.30\textwidth]{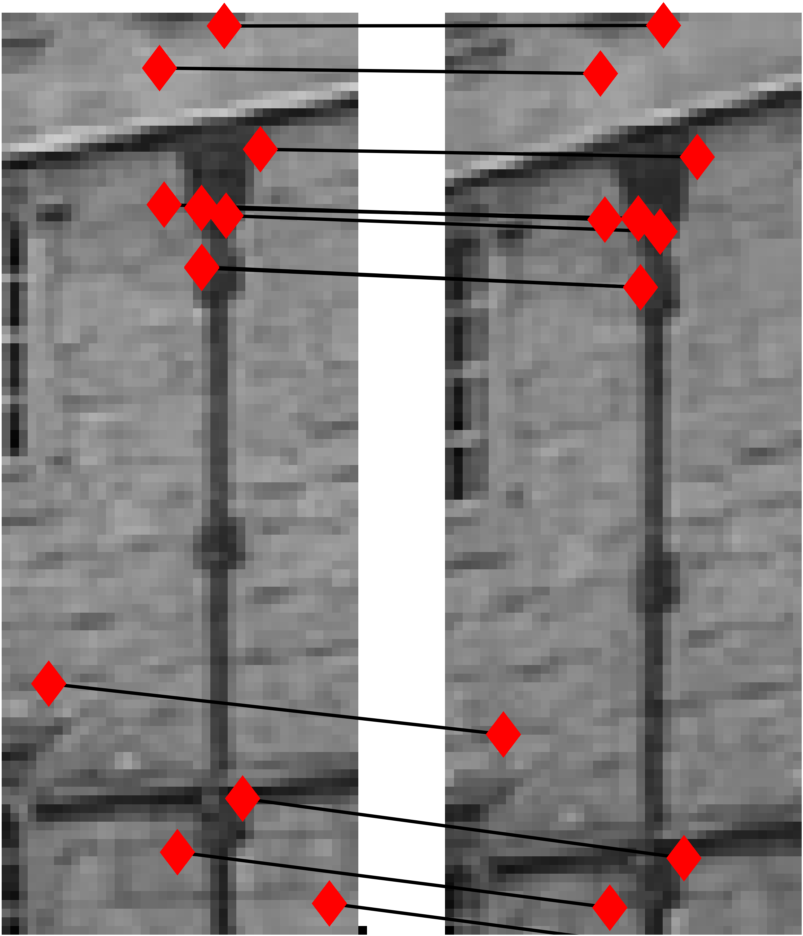}} &
\multicolumn{1}{c}{\includegraphics[height=0.20\textheight,width=0.30\textwidth]{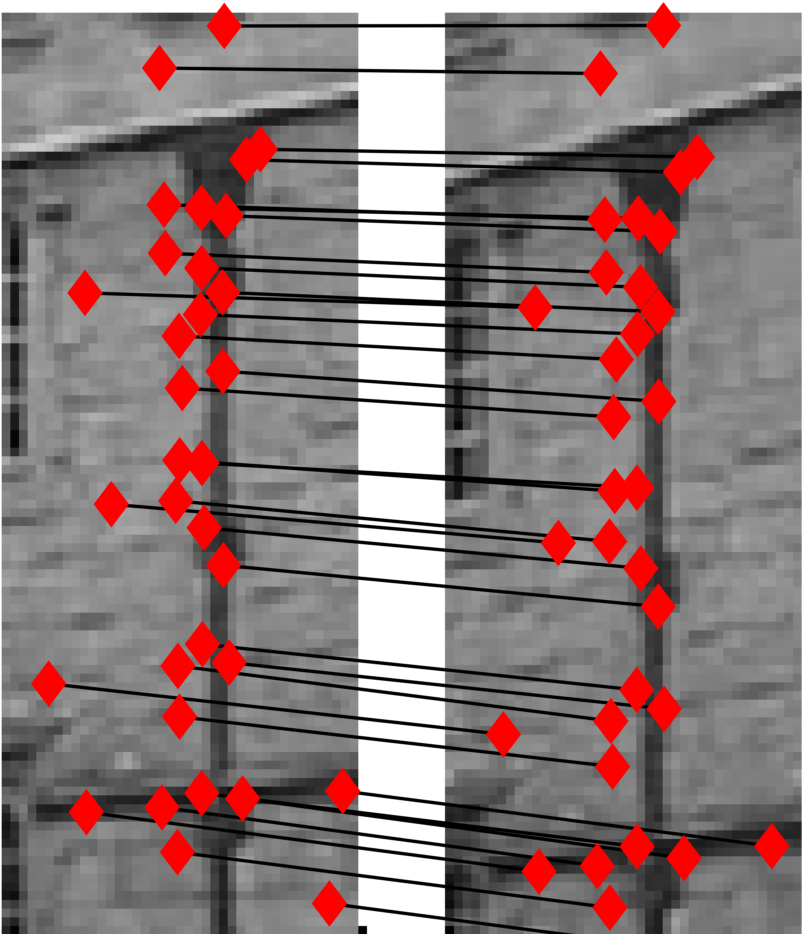}}\\
{\small (g) Part of the EF matching, between left and right images, i.e.~inliers of the SIFT standard matching.}  &
{\small (h) Part of the EFM matching, between left and right images.}
\end{tabular}
\end{center}
\caption{{\small Merton College from VGG Oxford, Fig.~(a) shows EF result (average TPR=0.51 and FPR=1.6E-05) and (b) shows EFM result (average TPR=0.98 and FPR=9.1E-06). The averaging is done over 50 runs. Figures~(c-d) show the enlargement of Segment 1 in (a) and (b), respectively, and Fig.~(e-f) show the enlargement of Segment 2 in (a) and (b), respectively. Figures~(g-h) show the matching, between two small regions in the left and right images, of the EF and EFM results, respectively.  The average GQ ratios are 1.042 and 1.0630 for
EF estimated models, and 1.0102 and 1.0079 for EFM
estimated models.}}
\label{fig:metronCollgeZoomed}
\end{figure}

\begin{figure}
\begin{center}
\begin{tabular}{p{0.33\textwidth}p{0.33\textwidth}p{0.33\textwidth}}
\includegraphics[height=0.15\textheight,width=0.30\textwidth]{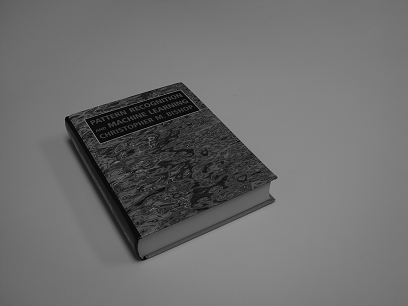} &
\includegraphics[height=0.15\textheight,width=0.30\textwidth]{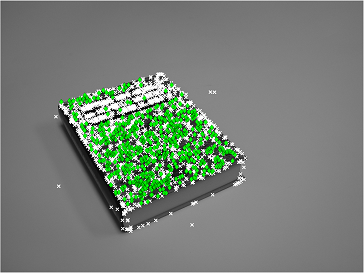} &
\includegraphics[height=0.15\textheight,width=0.30\textwidth]{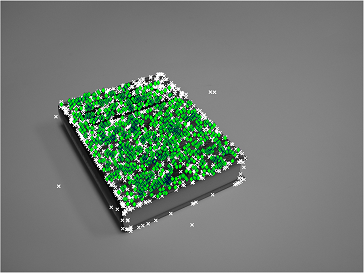} \\
\includegraphics[height=0.15\textheight,width=0.30\textwidth]{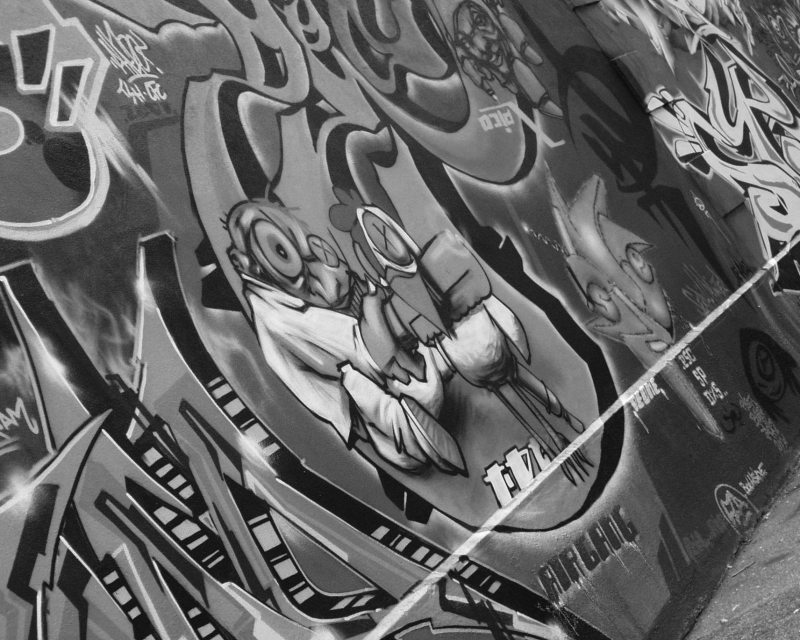} &
\includegraphics[height=0.15\textheight,width=0.30\textwidth]{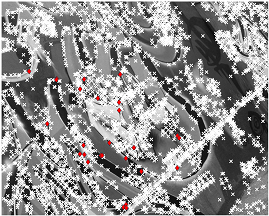} &
\includegraphics[height=0.15\textheight,width=0.30\textwidth]{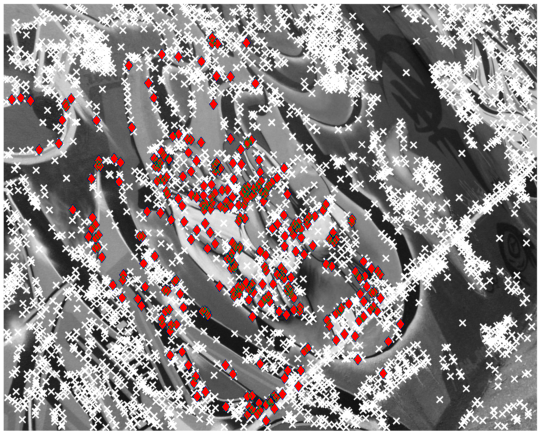} \\
\includegraphics[height=0.15\textheight,width=0.30\textwidth]{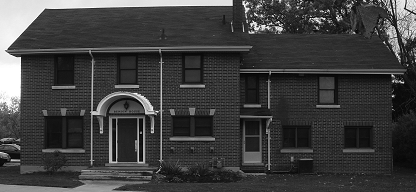} &
\includegraphics[height=0.15\textheight,width=0.30\textwidth]{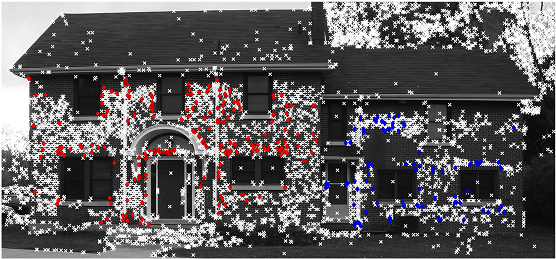} &
\includegraphics[height=0.15\textheight,width=0.30\textwidth]{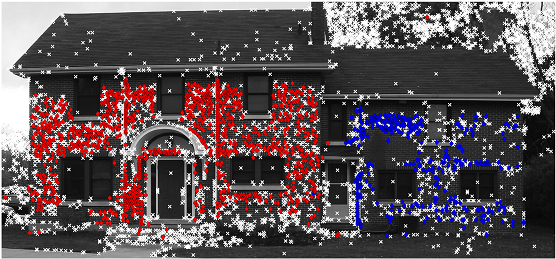} \\
\includegraphics[height=0.15\textheight,width=0.30\textwidth]{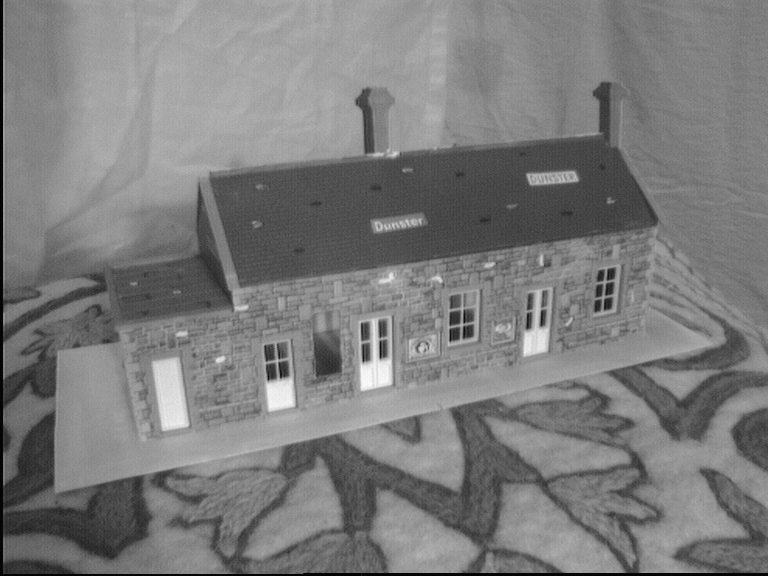} &
\includegraphics[height=0.15\textheight,width=0.30\textwidth]{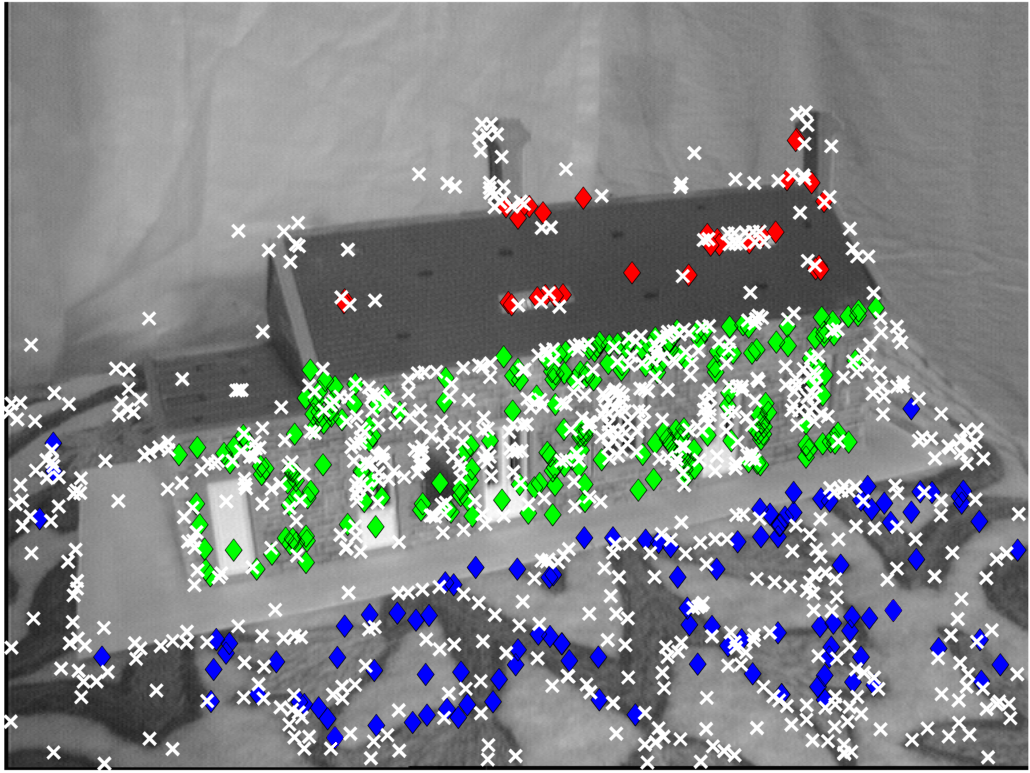} &
\includegraphics[height=0.15\textheight,width=0.30\textwidth]{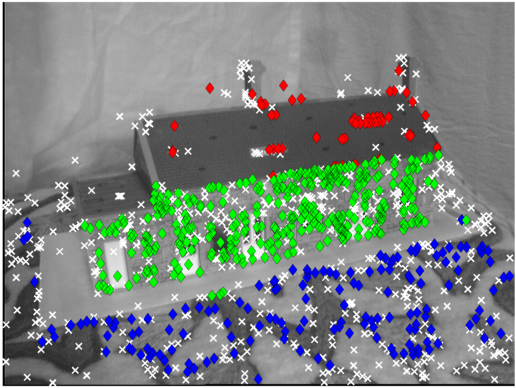} \\
\includegraphics[height=0.15\textheight,width=0.30\textwidth]{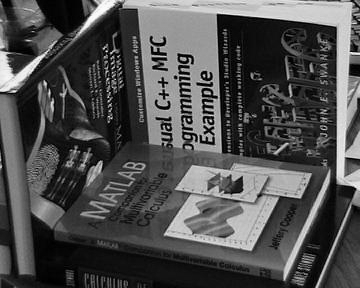} &
\includegraphics[height=0.15\textheight,width=0.30\textwidth]{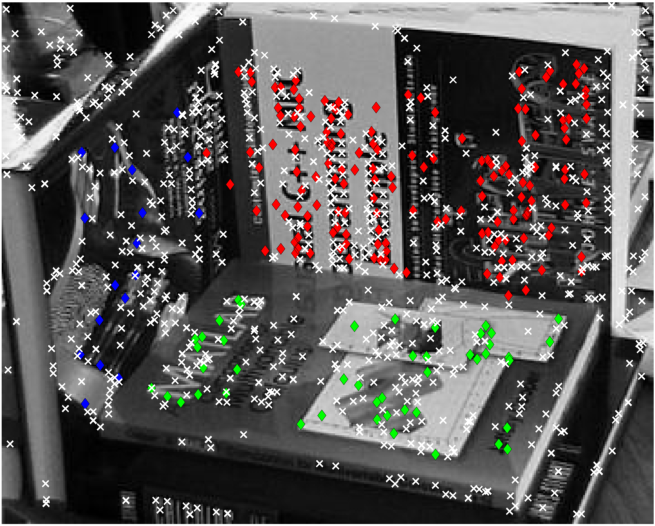} &
\includegraphics[height=0.15\textheight,width=0.30\textwidth]{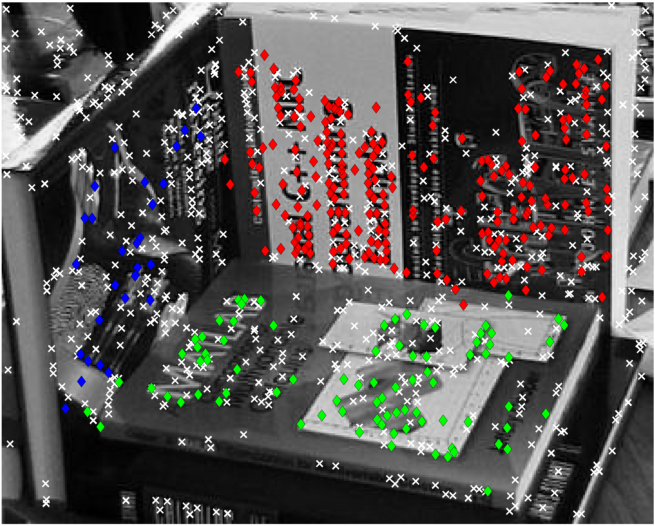} \\
\includegraphics[height=0.15\textheight,width=0.30\textwidth]{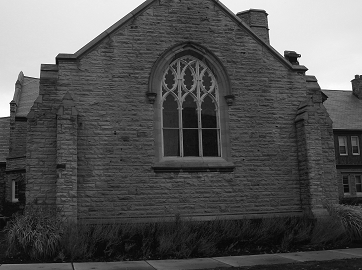} &
\includegraphics[height=0.15\textheight,width=0.30\textwidth]{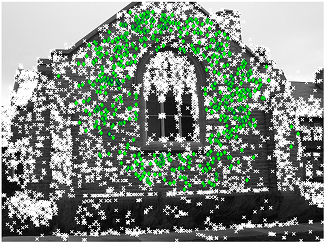} &
\includegraphics[height=0.15\textheight,width=0.30\textwidth]{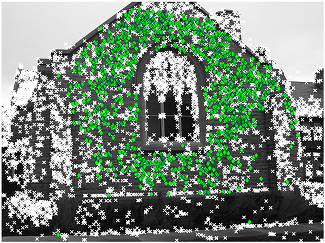} \\
\end{tabular}
\end{center}
\caption{{\small First column shows left images of the examples, second and third columns show the EF and EFM results, respectively. The average increase, over 50 runs, in the number of matches found by EFM in comparison to EF for the examples shown above (top to bottom) is $0.76$, $10.53$, $3.33$, $0.44$, $0.6$ and $0.68$, respectively.}}
\label{fig:multipleExamples}
\end{figure}

\begin{table}
\begin{center}
\begin{tabular}{c c|r|r|r|r|r|r|r|}
\cline{3-9}
           &            & \multicolumn{3}{c|}{{\bf GQ}} &                  \multicolumn{4}{c|}{{\bf ROC}} \\
\cline{3-9}
           &            & {\bf median} & {\bf mean} & {\bf variance} & {\bf TP } & {\bf FP } & {\bf TPR } & {\bf FPR } \\
\cline{2-9}
\multicolumn{1}{c|}{\multirow{8}{*}{\begin{sideways}small viewpoint\end{sideways}}} &         \multirow{2}{*}{EFM} & \multirow{2}{*}{\bf 1.0048} &     \multirow{2}{*}{1.0074} &   \multirow{2}{*}{4.00E-06} &        \multirow{2}{*}{824} &      \multirow{2}{*}{18.08} &       \multirow{2}{*}{\bf{0.98}} &   \multirow{2}{*}{\bf{2.30E-06}} \\
\multicolumn{1}{c|}{}&  & & & & & & &    \\
\cline{2-9}
\multicolumn{1}{c|}{}& {EF} & \multirow{2}{*}{\bf 1.0386} &    \multirow{2}{*}{1.0475} &   \multirow{2}{*}{1.00E-03} &        \multirow{2}{*}{602} &      \multirow{2}{*}{31.66} &       \multirow{2}{*}{0.72} &   \multirow{2}{*}{4.10E-06} \\
\multicolumn{1}{c|}{}& {SBR=0.6} &  & &  & &  & & \\
\cline{2-9}
\multicolumn{1}{c|}{}& {EF} &     \multirow{2}{*}{1.0415} &     \multirow{2}{*}{1.0519} &   \multirow{2}{*}{1.30E-03} &       \multirow{2}{*}{652} &      \multirow{2}{*}{41.14} &       \multirow{2}{*}{0.78} &   \multirow{2}{*}{5.20E-06} \\
\multicolumn{1}{c|}{}& {SBR=0.7} &  & &  & &  & & \\
\cline{2-9}
\multicolumn{1}{c|}{}& {EF} &      \multirow{2}{*}{1.0460} &     \multirow{2}{*}{1.0521} &   \multirow{2}{*}{8.00E-04} &        \multirow{2}{*}{691} &       \multirow{2}{*}{51.60} &    \multirow{2}{*}{0.82} &   \multirow{2}{*}{6.60E-06} \\
\multicolumn{1}{c|}{}& {SBR=0.8} &  & &  & &  & & \\
\cline{2-9} \\
\cline{2-9}
\multicolumn{1}{c|}{\multirow{8}{*}{\begin{sideways}medium viewpoint\end{sideways}}} &         \multirow{2}{*}{EFM} & \multirow{2}{*}{\bf 1.0183} &     \multirow{2}{*}{1.0194} &   \multirow{2}{*}{1.00E-06} &        \multirow{2}{*}{501} &      \multirow{2}{*}{26.96} &       \multirow{2}{*}{\bf{0.97}} &   \multirow{2}{*}{3.10E-06} \\
\multicolumn{1}{c|}{}&  & & & & & & &    \\
\cline{2-9}
\multicolumn{1}{c|}{}& {EF} & \multirow{2}{*}{1.1742} &    \multirow{2}{*}{1.3031} &   \multirow{2}{*}{1.71E-01} &        \multirow{2}{*}{94} &      \multirow{2}{*}{19.18} &       \multirow{2}{*}{0.18} &   \multirow{2}{*}{\bf{2.20E-06}} \\
\multicolumn{1}{c|}{}& {SBR=0.6} &  & &  & &  & & \\
\cline{2-9}
\multicolumn{1}{c|}{}& {EF} &     \multirow{2}{*}{1.1989} &     \multirow{2}{*}{1.3012} &   \multirow{2}{*}{8.64E-02} &       \multirow{2}{*}{171} &      \multirow{2}{*}{33.22} &       \multirow{2}{*}{0.33} &   \multirow{2}{*}{3.80E-06} \\
\multicolumn{1}{c|}{}& {SBR=0.7} &  & &  & &  & & \\
\cline{2-9}
\multicolumn{1}{c|}{}& {EF} &      \multirow{2}{*}{\bf 1.0806} &     \multirow{2}{*}{1.2594} &   \multirow{2}{*}{1.09E-00} &        \multirow{2}{*}{256} &       \multirow{2}{*}{49.20} &    \multirow{2}{*}{0.49} &   \multirow{2}{*}{5.6E-06} \\
\multicolumn{1}{c|}{}& {SBR=0.8} &  & &  & &  & & \\
\cline{2-9} \\
\cline{2-9}
\multicolumn{1}{c|}{\multirow{8}{*}{\begin{sideways}large viewpoint\end{sideways}}} &         \multirow{2}{*}{EFM} & \multirow{2}{*}{\bf 1.0523} &     \multirow{2}{*}{1.0698} &   \multirow{2}{*}{2.00E-03} &        \multirow{2}{*}{300} &      \multirow{2}{*}{15.72} &       \multirow{2}{*}{\bf{0.96}} &   \multirow{2}{*}{1.70E-06} \\
\multicolumn{1}{c|}{}&  & & & & & & &    \\
\cline{2-9}
\multicolumn{1}{c|}{}& {EF} & \multirow{2}{*}{2.6412} &    \multirow{2}{*}{2.6413} &   \multirow{2}{*}{1.30E-06} &        \multirow{2}{*}{9} &      \multirow{2}{*}{2}&       \multirow{2}{*}{0.03}&   \multirow{2}{*}{\bf{2.20E-07}} \\
\multicolumn{1}{c|}{}& {SBR=0.6} &  & &  & &  & & \\
\cline{2-9}
\multicolumn{1}{c|}{}& {EF} &     \multirow{2}{*}{\bf 1.8993} &     \multirow{2}{*}{2.2440} &   \multirow{2}{*}{1.22E-00} &       \multirow{2}{*}{19} &      \multirow{2}{*}{5.48} &       \multirow{2}{*}{0.06} &   \multirow{2}{*}{5.90E-07} \\
\multicolumn{1}{c|}{}& {SBR=0.7} &  & &  & &  & & \\
\cline{2-9}
\multicolumn{1}{c|}{}& {EF} &      \multirow{2}{*}{2.4915} &     \multirow{2}{*}{3.8799} &   \multirow{2}{*}{9.31E-00} &        \multirow{2}{*}{36} &       \multirow{2}{*}{13.04} &    \multirow{2}{*}{0.12} &   \multirow{2}{*}{1.40E-06} \\
\multicolumn{1}{c|}{}& {SBR=0.8} &  & &  & &  & & \\
\cline{2-9} \\
\cline{2-9}
\multicolumn{1}{c|}{\multirow{8}{*}{\begin{sideways}multi-model case\end{sideways}}} &   \multirow{2}{*}{EFM} & {\bf 1.0102} &     1.0102 &  1.90E-09 & \multirow{2}{*}{656} & \multirow{2}{*}{36.49}  & \multirow{2}{*}{\bf{0.98}} & \multirow{2}{*}{\bf{9.10E-06}} \\
\cline{3-5}
\multicolumn{1}{c|}{}& &{\bf 1.0046} & 1.0079& 1.20E-05& & & &     \\
\cline{2-9}
\multicolumn{1}{c|}{}& {EF} &  1.0625 &    1.0681 &   7.00E-04 &  \multirow{2}{*}{258} &  \multirow{2}{*}{45.48} & \multirow{2}{*}{0.38} & \multirow{2}{*}{1.10E-05} \\
\cline{3-5}
\multicolumn{1}{c|}{}&  {SBR=0.6}&1.0397 &1.0514 & 1.80E-03& & & &  \\
\cline{2-9}
\multicolumn{1}{c|}{}& {EF} &  {\bf 1.0383}&    1.0420 &   2.00E-04 &  \multirow{2}{*}{344} &  \multirow{2}{*}{63.04} & \multirow{2}{*}{0.52} & \multirow{2}{*}{1.60E-05} \\
\cline{3-5}
\multicolumn{1}{c|}{}&  {SBR=0.7}&1.0427 & 1.0630& 2.20E-03& & & &    \\
\cline{2-9}
\multicolumn{1}{c|}{}& {EF} &  {\bf 1.0218}&    1.0243 &   3.00E-04 &  \multirow{2}{*}{431} &  \multirow{2}{*}{76.18} & \multirow{2}{*}{0.64} & \multirow{2}{*}{1.90E-05} \\
\cline{3-5}
\multicolumn{1}{c|}{}&  {SBR=0.8}&1.0447 &1.0905 &9.80E-03 & & & &    \\
\cline{2-9}
\end{tabular}
\caption{{\small In the case of a single model and increasing viewpoint (Graphite VGG Oxford), the first three blocks show the average, over 50 runs, ROC attributes and $GQ$ of EFM and EF (using different SBR ratios). The EFM and EF results were comparable for small viewpoint but as the viewpoint increases EFM model estimates becomes more reliable in comparison to EF estimates. The last block shows the results for a  multi-model case (Merton College VGG Oxford). In both cases, EFM achieved near optimal matching.}}
\label{tbl:SBR}
\end{center}
\end{table}

\begin{table}
\begin{center}
\begin{tabular}{c c|r|r|r|r|r|r|r|}
\cline{3-9}
           &            & \multicolumn{3}{c|}{{\bf GQ}} &                  \multicolumn{4}{c|}{{\bf ROC}} \\
\cline{3-9}
           &            & {\bf median} & {\bf mean} & {\bf variance} & {\bf TP } & {\bf FP } & {\bf TPR } & {\bf FPR } \\
\cline{2-9}
\multicolumn{1}{c|}{\multirow{4}{*}{\begin{sideways}$T \leq 1$\end{sideways}}} &   \multirow{2}{*}{EFM} & 1.0031& 1.0040  & 6.3E-6  & \multirow{2}{*}{529} & \multirow{2}{*}{34.20}  & \multirow{2}{*}{0.95} & \multirow{2}{*}{8.5E-06} \\
\cline{3-5}
\multicolumn{1}{c|}{}& &1.0225 & 1.1251 & 0.0207 & & & &   \\
\cline{2-9}
\multicolumn{1}{c|}{}& {EF} & 1.1999 & 1.2470  & 0.0427 &  \multirow{2}{*}{241} &  \multirow{2}{*}{33.12} & \multirow{2}{*}{0.43} & \multirow{2}{*}{8.3E-06} \\
\cline{3-5}
\multicolumn{1}{c|}{}&  {SBR=0.7}& 1.2384 & 1.2750 & 0.0331 & & & &    \\
\cline{2-9} \\
\cline{2-9}
\multicolumn{1}{c|}{\multirow{4}{*}{\begin{sideways}$T \leq 2$\end{sideways}}} &   \multirow{2}{*}{EFM} & 1.0102& 1.0102 & 1.9E-9 & \multirow{2}{*}{656} & \multirow{2}{*}{36.489}  & \multirow{2}{*}{0.98} & \multirow{2}{*}{9.1E-06} \\
\cline{3-5}
\multicolumn{1}{c|}{}& & 1.0046& 1.0079& 1.2E-5& & & &     \\
\cline{2-9}
\multicolumn{1}{c|}{}& {EF} & 1.0383 & 1.0427  & 0.0002 &  \multirow{2}{*}{344} &  \multirow{2}{*}{63.04} & \multirow{2}{*}{0.52} & \multirow{2}{*}{1.6E-05} \\
\cline{3-5}
\multicolumn{1}{c|}{}&  {SBR=0.7}& 1.0427& 1.0630& 0.0022& & & &    \\
\cline{2-9} \\
\cline{2-9}
\multicolumn{1}{c|}{\multirow{4}{*}{\begin{sideways}$T \leq 3$\end{sideways}}} &   \multirow{2}{*}{EFM} &1.0084 & 1.0083 & 0.4E-9 & \multirow{2}{*}{720} & \multirow{2}{*}{45}  & \multirow{2}{*}{0.99} & \multirow{2}{*}{1.1E-05} \\
\cline{3-5}
\multicolumn{1}{c|}{}& & 1.0046&1.0079 & 1.2E-5& & & &     \\
\cline{2-9}
\multicolumn{1}{c|}{}& {EF} & 1.0372 &  1.0347 &  2.5E-5&  \multirow{2}{*}{369} &  \multirow{2}{*}{65.34} & \multirow{2}{*}{0.51} & \multirow{2}{*}{1.6E-05} \\
\cline{3-5}
\multicolumn{1}{c|}{}&  {SBR=0.7}& 1.0500 & 1.0589 & 0.0015 & & & &    \\
\cline{2-9}
\end{tabular}
\caption{{\small shows the effect of the fitting threshold $T$ (used in computing ground truth, EF and EFM) on the average quality of estimated models and ROC attributes, over 50 runs.
The performance for both EF and EFM in the case of $T\leq2$ is better than the case of $T\leq1$ because the threshold in the later case is underestimated---see $GQ$ variance.
Furthermore, the average increase in the model estimate quality
for EF and EFM between $T\leq3$ and $T\leq2$ is not as significant as the average increase between $T\leq2$ and $T\leq1$.}}
\label{tbl:thresholdEffect}
\end{center}
\end{table}
\vspace{-2ex}
\section{Conclusions}

We introduced two energy functionals that use different regularizers for the {\em fit-\&-match} problem. We also introduced optimization frameworks for these functionals. Our experimental results show that our energy-based {\em fit-\&-match} framework finds a near optimal solution for the feature-to-feature matching and better model estimates in contrast to state-of-the-art energy-based fitting frameworks, e.g.~\PEARL. In addition, we showed that for a given set of models it is possible to efficiently find the optimal feature-to-feature matching and match-to-label assignment. Our framework could be used to {\em fit-\&-match} more complex models, e.g.~fundamental matrices, without affecting the framework's complexity.
It could also be used to {\em fit-\&-match} a mixture of different models, e.g.~homographies and affine transformations, and penalize each model/label based on its complexity. Finally, we plan on applying our framework in camera pose estimation.

\appendix
\section{Lemma~\ref{lm:mcmf} Proof}
\label{ap:mcmfproof}
The following proof assumes that there exists a feasible solution for GAP with a finite objective value. GAP is unfeasible when $|\lF|\neq|\rF|$, e.g.~$|\lF|<|\rF|$ and in that case adding $(|\rF|-|\lF|)$ dummy features with a fixed matching penalty $T$ to $\lF$ will ensure the GAP feasibility. The objective value of a feasible GAP solution is guaranteed to be finite when GAP is solved over any set of models and an outlier model $\phi$ with a fixed matching penalty $T$ for all possible pairs of matched features.
We will prove a more general theorem than Lemma~\ref{lm:mcmf}. Lemma~\ref{lm:mcmf} is a derivative of Theorem~\ref{th:gap2mcmf}.

\begin{theorem}
\label{th:gap2mcmf}
``{\it There exists an optimal solution, with an objective value $k^*$, of a GAP instance {\bf  if and only if}
there exists a valid MCMF $\mathbf{F^*}$ over $\graph^*$, of the GAP instance, with $cost(\mathbf{F^*})=k^*$}''.
\end{theorem}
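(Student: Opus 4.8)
The plan is to exhibit a cost-preserving bijection between feasible solutions of GAP and integral $s$--$t$ flows of maximum value on $\graph^*$, and then invoke the integrality of min-cost-max-flow. First I would observe that, after the dummy-feature padding described above, $|\lF|=|\rF|=:n$, and that $\{(s,n_p)\mid p\in\lF\}$ is an $s$--$t$ cut of $\graph^*$ of capacity $n$; hence every $s$--$t$ flow has value at most $n$. Conversely, any feasible GAP solution $\{x_{pqh}\}$ (which exists by hypothesis, the outlier model guaranteeing finiteness of the objective) induces a flow of value exactly $n$ by routing one unit along the path $s\to n_p\to n_{ph}\to n_{qh}\to n_q\to t$ for every triple with $x_{pqh}=1$. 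The one-to-one constraints $\sum_{q,h}x_{pqh}=1$ and $\sum_{p,h}x_{pqh}=1$ guarantee flow conservation at every node and that no unit capacity is exceeded, and since the only edges carrying cost are the $(n_{ph},n_{qh})$ edges with cost $D_{pq}(\param_h)$, the cost of this flow equals $\sum_{p,q,h}D_{pq}(\param_h)\,x_{pqh}$, i.e.\ the GAP objective. Thus the maximum flow value on $\graph^*$ is $n$, and the GAP optimum is an upper bound for the MCMF cost.

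For the reverse direction I would start from an integral maximum flow $\mathbf{F}$ on $\graph^*$, which may be taken to be a min-cost-max-flow because $\graph^*$ has integer (unit) capacities (standard min-cost-flow theory, cf.~\cite{Goldberg92anefficient}). Since $\graph^*$ is a layered acyclic graph with unit capacities and $\mathbf{F}$ has value $n$, flow decomposition yields $n$ edge-disjoint $s$--$t$ paths, each necessarily of the form $s\to n_p\to n_{ph}\to n_{qh}\to n_q\to t$; setting $x_{pqh}:=\mathbf{F}(n_{ph},n_{qh})\in\{0,1\}$ recovers a $0$--$1$ vector. Saturation of the cut $\{(s,n_p)\}$, which is forced because $\mathbf{F}$ is maximum of value $n$, together with conservation upgrades the capacity inequalities to $\sum_{q,h}x_{pqh}=1$ for every $p$, and symmetrically $\sum_{p,h}x_{pqh}=1$ for every $q$; hence $\{x_{pqh}\}$ is GAP-feasible, with objective value equal to $cost(\mathbf{F})$.

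Combining the two directions, $\mathbf{F}\mapsto\{x_{pqh}\}$ is a cost-preserving bijection between integral value-$n$ flows and feasible GAP solutions, so minimising $cost$ on the left equals minimising the GAP objective on the right; since a min-cost-max-flow $\mathbf{F^*}$ is integral of value $n$, $cost(\mathbf{F^*})$ equals the GAP optimum $k^*$, which establishes both implications of Theorem~\ref{th:gap2mcmf}, and Lemma~\ref{lm:mcmf} follows by reading off $\M_{\labelvar}$ from $\mathbf{F^*}$ exactly as stated. The main obstacle is the bookkeeping in the flow-decomposition step: verifying that every decomposed path has the prescribed layered shape and that the cut-saturation argument correctly promotes the inequalities $\sum x_{pqh}\le 1$ to the equalities demanded by the assignment constraints. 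Once that is in place, the cost equality is immediate, since cost is supported only on the middle layer of edges.
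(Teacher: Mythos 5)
Your proof is correct and rests on the same correspondence the paper itself uses: the unit-capacity layered graph $\graph^*$, the identification $x_{pqh}=\mathbf{F}(n_{ph},n_{qh})$, and the fact that cost is supported only on the middle-layer edges (the paper's Corollary~\ref{cr:F_GAP_cost}). The only difference is organizational---you package the two directions as a single cost-preserving bijection between value-$n$ integral flows and feasible GAP solutions and then invoke integrality of min-cost-max-flow, whereas the paper runs a proof by contradiction in each direction and verifies feasibility by enumerating the ways the assignment constraints could be violated---so this is essentially the paper's argument.
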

\begin{proof}
Assume that there exists a GAP optimal solution $\M^*_\labelvars$ with an objective value $k^*$.
If there exists a valid MCMF $\mathbf{F}$ over $\graph^*$ with $cost(\mathbf{F})=k$ such that $k<k^*$ then we can construct a feasible GAP solution $\M_\labelvars$ where $\M_\labelvars=\{x_{pqh}=\mathbf{F}(n_{ph},n_{qh}) \;|\;  p\in \lF,q\in \rF, h\in \labelset\}$. Using Corollary~\ref{cr:F_GAP_cost} we can deduce that the objective value of the constructed GAP solution $\M_\labelvars$ is equal to $cost(\mathbf{F})$.
Now we prove that the constructed solution $\M_\labelvars$ is feasible by showing that $\M_\labelvars$ can not be unfeasible, i.e.~one or more of the constraints~\eqref{eq:121Constrains_MF} can not be violated in the constructed solution. Constraints~\eqref{eq:121Constrains_MF} are violated when
\begin{enumerate}[I]
\item {\it a feature $p\in\lF$ is not assigned to any feature.}\\
That means the MCMF $\mathbf{F}$ used to construct $\M_\labelvars$ does not saturate $\graph^*$ and this is a contradiction to our assumption that $\mathbf{F}$ is a MCMF. Notice edges $(s,n_p)$ for $p\in\lF$ must be saturated as in the worst case scenario $p$ will be matched to another feature through the outlier model for a fixed cost penalty $T$.
\item {\it a feature $p\in\lF$ is assigned to more than one feature in $\rF$, e.g.~$q_1$ and $q_2$.}\\
If there exist two models $h$ and $\ell$ such that $x_{pq_1h}\!\!=\!\!1$ and $x_{pq_2\ell}\!\!=\!\!1$ then $\mathbf{F}(n_{ph},n_{q_1h})\!\!=\!\!1$ and $\mathbf{F}(n_{p\ell},n_{q_2\ell})\!\!=\!\!1$ must be true by construction of $\M_\labelvars$. By construction of $\graph^*$, $n_{ph}$ and $n_{p\ell}$ acquire their flow from $n_p$, and $n_p$ could only push out one unit of flow. Therefore, for $\mathbf{F}(n_{ph},n_{q_1h})=1$ and $\mathbf{F}(n_{p\ell},n_{q_2\ell})=1$ to be true $n_p$ must push two units of flow and that contradicts our assumption that $\mathbf{F}$ is a valid flow over $\graph^*$.
\item {\it a feature $q\in\rF$ is assigned to a zero or more than one feature in $\lF$.}\\
We could show that scenario could not happen for $\M_\labelvars$ by reversing the roles of $p$ and $q$ in I and II.
\item {\it a matched pair of features $p$ and $q$ are assigned to more than one model. e.g.~$h_1$ and $h_2$.}\\
If $x_{pqh_1}\!\!\!=\!\!1$ and $x_{pqh_2}\!\!\!=\!\!1$ then $\mathbf{F}(n_{ph_1},n_{qh_1})\!\!=\!\!1$ and $\mathbf{F}(n_{ph_2},n_{qh_2})\!\!=\!\!1$ must be true by construction of $\M_\labelvars$. By construction of $\graph^*$, $n_{ph_1}$ and $n_{ph_2}$ acquire two units flow from $n_p$ while $n_p$ could only push out one unit of flow. Therefore, for $\mathbf{F}(n_{ph_1},n_{qh_1})\!\!=\!\!1$ and $\mathbf{F}(n_{ph_2},n_{qh_2})\!\!=\!\!1$ to be true $n_p$ must push our two units of flow and that contradicts our assumption that $\mathbf{F}$ is a valid flow over $\graph^*$.
\end{enumerate}

Finally, if such a solution $\M_\labelvars$ exist then $\M^*_\labelvars$ is not optimal as $k^*$ will be bigger than $k$ and that contradicts our main assumption that $\M^*_\labelvars$ an optimal GAP solution, i.e.~$k^*$ is the lowest possible objective value.

Assume that $\mathbf{F^*}$ is a valid MCMF over $\graph^*$ with $cost(\mathbf{F^*})=k^*$. If there exists an feasible solution $\M_\labelvars$ for which the objective value is $k<k^*$ then we can construct a valid MCMF $\mathbf{F}$ where
$$\begin{matrix*}[l]
           &\mathbf{F}(s,n_p)&=1 & \forall p\in \lF \\
           &\mathbf{F}(n_p,n_{ph})&=\begin{cases} 1 & \exists q \in \rF \text{ where } x_{pqh}=1 \\ 0 &\text{otherwise}\end{cases} &\forall p\in \lF,h\in \labelset\\
           &\mathbf{F}(n_{ph},n_{qh})&=x_{pqh} & \forall p\in \lF,q\in \rF, h\in \labelset\\
           &\mathbf{F}(n_{qh},n_q)&=\begin{cases} 1 & \exists p \in \lF \text{ where } x_{pqh}=1\\ 0 &\text{otherwise}\end{cases} &\forall q\in \rF,h\in \labelset \\
           &\mathbf{F}(n_q,t)&=1 & \forall q\in \rF.
\end{matrix*}$$
Using Corollary~\ref{cr:F_GAP_cost} we can deduce that the $cost(\mathbf{F})=k$.

No we will prove that the constructed flow $\mathbf{F}$ is a valid MCMF. A flow is considered valid if it satisfies the capacity and conservation of flow constraints over $\graph^*$.
$\mathbf{F}$ satisfies the capacity constraints by construction of $\mathbf{F}$---the flow through any edge is either 1 or 0 while all edge capacities are 1. Furthermore, $\mathbf{F}$ was constructed in a way that preserves the flow with in $\graph^*$. That is, if there is a flow through edge $(n_{ph},n_{qh})$ we create a flow from $s$ to $n_{ph}$ and from $n_{qh}$ to $t$ along the following paths $\{s,n_p,n_{ph}\}$ and $\{n_{qh},n_q,t\}$, respectively. Therefore, the conservation flow is preserved at $n_{ph}$,$n_{qh}$,$n_p$ and $n_q$.
Notice that there can not exist $n_{ph_1}$ and $n_{ph_2}$ both creating along $\{s,n_p,ph_1\}$ and $\{s,n_p,ph_2\}$ as in this case $\M_\labelvars$ will unfeasible which is a contradiction.
Moreover, the amount of flow going in to $\graph^*$ through $s$ is $|\lF|$ and the amount of flow going out of $\graph^*$ through $t$ is $|\rF|$ and since $\M_\labelvars$ is a feasible GAP solution, by definition, then $|\lF|$ must be equal to $|\rF|$. Thus, the conservation of flow constraint is preserved at $s$ and $t$, and $\graph^*$ is saturated. Thus, the constructed flow $\mathbf{F}$ is a valid MCMF flow.

Finally, if such a solution $\mathbf{F}$ exist then $\mathbf{F^*}$ is not MCMF as $cost(\mathbf{F})<cost(\mathbf{F^*})$ and that contradicts our main assumption that $\mathbf{F^*}$ is a MCMF over $\graph^*$.
\end{proof}
\begin{corollary}
\label{cr:F_GAP_cost}
For a valid $\mathbf{F}$ over $\graph^*$ and a GAP solution $\M_\labelvars$ where $\M_\labelvars=\{x_{pqh}=\mathbf{F}(n_{ph},n_{qh}) \;|\;  p\in \lF,q\in \rF, h\in \labelset\}$, the objective value of the GAP solution  $\M_\labelvars$ is equal to $cost(\mathbf{F})$.
\end{corollary}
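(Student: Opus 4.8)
The plan is to unwind both definitions and observe that they agree term by term; this is a purely bookkeeping argument and no real obstacle arises. First I would write out the objective value of the GAP solution $\M_\labelvars$ by setting $\beta=0$ in~\eqref{eq:lc_givenParams}, which removes the label-cost term $\beta\sum_h\delta_h(\M_\labelvars)$ and leaves exactly
$$\sum_{h\in\labelset}\sum_{p\in\lF}\sum_{q\in\rF} D_{pq}(\param_h)\, x_{pqh}.$$

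Next I would expand $cost(\mathbf{F})=\sum_{(v,w)\in\edges} c(v,w)\,\mathbf{F}(v,w)$ and invoke the definition of the cost function $c$ from the graph construction $\graph^*$: it is $0$ on every edge except those of the form $(n_{ph},n_{qh})$ with $p\in\lF$, $q\in\rF$, $h\in\labelset$, where it equals $D_{pq}(\param_h)$. Hence the source edges $(s,n_p)$, the edges $(n_p,n_{ph})$, the edges $(n_{qh},n_q)$ and the sink edges $(n_q,t)$ all contribute nothing, and
$$cost(\mathbf{F}) = \sum_{p\in\lF}\sum_{q\in\rF}\sum_{h\in\labelset} D_{pq}(\param_h)\,\mathbf{F}(n_{ph},n_{qh}).$$
Finally, by the very definition of $\M_\labelvars$ we have $x_{pqh}=\mathbf{F}(n_{ph},n_{qh})$ for every triple $(p,q,h)$, so the two triple sums are literally the same expression, and $cost(\mathbf{F})$ equals the GAP objective value of $\M_\labelvars$.

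The only points worth checking are that the middle-layer edges $\{(n_{ph},n_{qh})\}$ are in bijection with the index set $\lF\times\rF\times\labelset$ so that no term in the cost sum is omitted or double-counted, and that the reduction drops precisely the $\delta_h$ term; both are immediate from the construction in Fig.~\ref{fig:MCMF_Config}(a) and the statement of GAP. Note that the argument is entirely algebraic and uses neither integrality of $\mathbf{F}$ nor flow conservation, which is why it can be stated as a standalone corollary and reused inside the proof of Theorem~\ref{th:gap2mcmf} in both directions of the equivalence.
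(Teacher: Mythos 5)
Your proposal is correct and follows essentially the same argument as the paper: expand $cost(\mathbf{F})$, drop the zero-cost edges, substitute $c(n_{ph},n_{qh})=D_{pq}(\param_h)$, and identify $\mathbf{F}(n_{ph},n_{qh})$ with $x_{pqh}$. Your extra remarks on the edge--index bijection and on not needing integrality or conservation are accurate but do not change the substance.
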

\begin{proof}
\begin{flalign*}
cost(\mathbf{F})=&\underset{(v,w)\in \edges}{\sum} c(v,w)\cdot\mathbf{F}(v,w) &&\text{\small by definition of flow cost}\\
=&\!\!\!\!\underset{(n_{ph},n_{qh})\in \edges}{\sum} \!\!\!\!c(n_{ph},n_{qh})\cdot\mathbf{F}(n_{ph},n_{qh}) &&\text{\small by construction of $\graph^*$, other edge costs are 0}\\
=&\!\!\!\!\sum_{(n_{ph},n_{qh})\in \edges}\!\!\!\!\!\!D_{pq}(\param_h)\cdot\mathbf{F}(n_{ph},n_{qh}) &&\text{\small by definition of $c$ over $\edges$}\\
=&\;\;\sum_{\substack{p \in \lF\\ q \in \rF\\h \in \labelset}} D_{pq}(\param_h)\cdot x_{pqh} &&\text{\small by condition in Corollary~\ref{cr:F_GAP_cost}.}
\end{flalign*}
\end{proof}

\section{Total Modularity Proof}
\label{ap:TMproof}
Our generalization of the assignment problem for solving the matching problem over a set of models $\labelset$ such that $|\labelset|\geq 1$ could be formulated as integer linear program \begin{eqnarray}
\mathbf{GAP:}\quad \underset{\M_\labelvars}{\arg \min}\quad  & \underset{h\in \labelset}{\sum}\; \underset{p \in \lF}{\sum} \; \underset{q \in \rF}{\sum}\unary_{pq}(\param_h) x_{pqh}& \nonumber \\
\textrm{s.t.}\quad   &  \underset{h\in \labelset}{\sum}\;\underset{ p \in \lF}{\sum} x_{pqh} =1  & \forall q \in \rF \label{eq:tmproof_first_121Constraint}\\
                     &  \underset{h\in \labelset}{\sum}\;\underset{ q \in \rF}{\sum} x_{pqh} =1  & \forall p \in \lF \label{eq:tmproof_second_121Constraint}\\
                     & x_{pqh} \in \{0,1\} & \forall h \in \labelset ,\; p \in \lF ,\; q \in \rF. \nonumber
\end{eqnarray}
The rest of this section proves that GAP is an {\em integral linear program}, that is, its LP relaxation is guaranteed to have an integer solution.

Let us denote the coefficient matrix and the right hand side vector of equations~\eqref{eq:tmproof_first_121Constraint} and~\eqref{eq:tmproof_second_121Constraint} by $A$ and $b$, respectively. It is known \cite{ILP:alex} that a linear program with constraints $Ax=b$ is {\em integral} for any objective function as long as $b$ is integer, which is true in our case, and matrix $A$ is {\em totally unimodular}. It remains to prove that $A$ is totally unimodular.

\begin{lemma}
\label{lm:TU}
Coefficient matrix $A$ of GAP's linear constraints is totally unimodular.
\end{lemma}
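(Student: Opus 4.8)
The plan is to recognize $A$ as (a multigraph version of) the incidence matrix of a bipartite graph and to prove total unimodularity by the classical induction on the order of square submatrices. First I would describe the structure of $A$ explicitly: its rows are indexed by $\lF \cup \rF$ and its columns by the triples $(p,q,h)\in \lF\times\rF\times\labelset$; the column associated with $x_{pqh}$ has a single $1$ in the row indexed by $p$ (from constraint~\eqref{eq:tmproof_second_121Constraint}), a single $1$ in the row indexed by $q$ (from constraint~\eqref{eq:tmproof_first_121Constraint}), and zeros elsewhere. Hence every column of $A$ has exactly two nonzero entries, both equal to $1$, one lying in the block of ``left'' rows $R_\ell:=\{\text{rows indexed by } p\in\lF\}$ and one in the block of ``right'' rows $R_r:=\{\text{rows indexed by } q\in\rF\}$. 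Several columns share the same pair of rows when they differ only in $h$, but this is harmless.

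Recall that $A$ is totally unimodular iff every square submatrix has determinant in $\{-1,0,1\}$. I would prove this by induction on the order $k$ of a square submatrix $B$ of $A$. The base case $k=1$ is immediate since all entries are $0$ or $1$. For the inductive step I distinguish three cases. If $B$ has a zero column then $\det B=0$. If $B$ has a column with exactly one nonzero entry, Laplace-expand along that column: $\det B=\pm\det B'$ for a $(k-1)\times(k-1)$ submatrix $B'$ of $A$, and the claim follows from the induction hypothesis. In the remaining case every column of $B$ contains exactly two $1$'s; then both $R_\ell$ and $R_r$ meet the row set of $B$, and by the structure above, summing the rows of $B$ that lie in $R_\ell$ yields the all-ones row vector over the columns of $B$, and so does summing the rows of $B$ that lie in $R_r$. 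Therefore the vector with coefficient $+1$ on the $R_\ell$-rows of $B$ and $-1$ on the $R_r$-rows of $B$ is a nonzero linear dependence among the rows of $B$, so $\det B=0$. This closes the induction, and combined with the cited fact~\cite{ILP:alex} that $Ax=b$ defines an integral polyhedron when $b$ is integral and $A$ is totally unimodular, it shows that GAP is an integral linear program.

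This is essentially the textbook argument that bipartite incidence matrices are totally unimodular; equivalently, one could invoke the sufficient condition of Ghouila--Houri with the fixed two-class row partition $\{R_\ell,R_r\}$. The only point requiring a little care, and the one I would spell out, is the bookkeeping in the last case: when $B$ is an arbitrary square submatrix in which every column has two $1$'s, one must verify that restricting the global row partition $R_\ell\cup R_r$ to the rows of $B$ still leaves exactly one $1$ of each column in each class. This is immediate, since the two $1$'s of a full column $(p,q,h)$ always sit in one $\lF$-row and one $\rF$-row, so no $R_\ell$-row of $B$ other than row $p$, and no $R_r$-row of $B$ other than row $q$, can contain a $1$ in that column.
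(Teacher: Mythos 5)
Your proof is correct, but it takes a genuinely different route from the paper. The paper's proof writes out the block structure of $A$ explicitly (the single-model matrix $A'$ and its $|\labelset|$-fold horizontal concatenation) and then verifies the three sufficient conditions of Heller and Tompkins: entries in $\{0,\pm 1\}$, at most two nonzeros per column, and a two-class row partition separating same-sign pairs --- which is precisely the partition into the rows of constraints~\eqref{eq:tmproof_first_121Constraint} and of~\eqref{eq:tmproof_second_121Constraint}, i.e.\ your $R_r$ and $R_\ell$. You instead prove total unimodularity from first principles by the classical induction on the order of square submatrices, with the three-way case split (zero column, column with one nonzero, all columns with two nonzeros) and the row-dependence argument in the last case. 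Both proofs rest on the same structural observation --- each column $(p,q,h)$ has exactly one $1$ in a left-feature row and one in a right-feature row, irrespective of $h$, so the label index only duplicates columns and is ``harmless'' --- which the paper phrases as the repeated columns of $A'$ being redundant for condition III. What your approach buys is self-containedness: it needs no citation to a sufficient criterion and makes visible exactly why repeated columns cannot break unimodularity (a determinant only sees a set of distinct columns of $B$, and your dependence argument never uses distinctness of the column pairs $(p,q)$). What the paper's approach buys is brevity and a reusable certificate: once the Heller--Tompkins partition is exhibited, no induction is needed. Your closing remark that the argument is equivalent to Ghouila--Houri with the fixed partition $\{R_\ell,R_r\}$ is accurate and essentially identifies your proof's last case with the paper's condition III, so the two arguments are two standard dressings of the same bipartite-incidence-matrix fact.
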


\begin{proof}
The coefficient matrix $A$ of GAP has a special structure that facilities its proof of total unimodularity. Let us assume, without loss of generality, that the number of features on the left and right images is $n$. Then the coefficients matrix in case $\labelset=\{h\}$ could be written as follows
\begin{equation*}
A^{'}=
\begin{array}{cc}
\begin{array}{ccc}
     \begin{array}{cc}  x_{11h} & x_{1nh} \end{array} &
     \begin{array}{c}   \;\;\; \hdots  \end{array} &
     \begin{array}{cc} \quad x_{n1h}&  x_{nnh} \end{array}
   \end{array} & \\
\left(
      \begin{array}{c|c|c}
              \begin{array}{c c c} 1 & \hdots& 1  \\ &        & \\   &        &   \end{array} &
              \begin{array}{c c c}   &        &   \\ & \hdots & \\   &        &   \end{array} &
              \begin{array}{c c c}   &        &   \\ &        & \\ 1 & \hdots & 1 \end{array} \\     \hline
              \begin{array}{c c c} 1 &        &   \\ & \ddots & \\   &        & 1 \end{array} &
              \begin{array}{c c c} 1 &        &   \\ & \ddots & \\   &        & 1 \end{array} &
              \begin{array}{c c c} 1 &        &   \\ & \ddots & \\   &        & 1 \end{array} \\
      \end{array}
\right) &
\begin{array}{c}
\left.\begin{aligned}        \\        \\        \\ \end{aligned} \right\} \text{eqs }\eqref{eq:tmproof_first_121Constraint}\\
\left.\begin{aligned}        \\        \\        \\ \end{aligned} \right\} \text{eqs }\eqref{eq:tmproof_second_121Constraint}
\end{array}\\
\end{array}.
\end{equation*}
In case $|\labelset|>1$ then coefficients matrix $A$ could be written as follows
\begin{equation*}
A=\bordermatrix{~&1& & 2 && \hdots && |\labelset| \cr
&A^{'}&| &A^{'}| &&\hdots &| &A^{'} \cr}.
\end{equation*}

Heller and Tompkins \cite{heller56} showed that in order to prove that $A$ is totally unimodular it is sufficient to prove that the following three conditions are satisfied by the coefficient matrix:
\begin{enumerate}[I]
\item {\em Every entry of the coefficient matrix is either 0, +1, or -1}.

This condition is satisfied for $A$ by construction, see equations~\eqref{eq:tmproof_first_121Constraint} and~\eqref{eq:tmproof_second_121Constraint}.
\item {\em Every column of the coefficient matrix contains at most two non-zero entries}.

Each column in $A$ corresponds to a unique decision variable, for example $x_{pqh}$.
 Note that  variable $x_{pqh}$ appears only once in linear equations~\eqref{eq:tmproof_first_121Constraint} and once in linear equations~\eqref{eq:tmproof_second_121Constraint}. Therefore, variable  $x_{pqh}$ appears twice in $A$. That is, the column corresponding to $x_{pqh}$ has exactly two non-zero entries.

\item {\em There exists a two set partitioning, say $I_1$ and $I_2$, for the rows of the coefficients matrix such that if two non-zero entries in any column have the same sign then these two rows are in different sets. And, if the non-zero entries have different signs then these two rows belong to the same set.}

Notice that $A^{'}$ satisfies condition III by setting $I_1$ and $I_2$ to the rows of~\eqref{eq:tmproof_first_121Constraint} and~\eqref{eq:tmproof_second_121Constraint}, respectively. Also, the coefficients matrix $A$ in case of more than one model is simply the horizontal concatenation of the coefficients matrix $A^{'}$ to itself $|\labelset|$ times. Thus, the constrains added over the two disjoint sets $I_1$ and $I_2$, that satisfy condition III over $A^{'}$, by repeating its columns are redundant. Finally, condition III will be satisfied by $A$ by the same row partitioning that would satisfy condition III for $A^{'}$.
\end{enumerate}
\end{proof}
\bibliographystyle{ieeetr}
\bibliography{ref}
\end{document}